\numberwithin{equation}{section}
\theoremstyle{plain}
\newtheorem{theorem}{Theorem}[section]
\newtheorem{lemma}[theorem]{Lemma}
\newtheorem{proposition}[theorem]{Proposition}
\newtheorem{corollary}[theorem]{Corollary}
\newtheorem{fact}[theorem]{Fact}
\theoremstyle{definition}
\newtheorem{definition}[theorem]{Definition}
\newcommand{\la}{\langle}
\newcommand{\ra}{\rangle}
\newcommand{\DD}{\mathcal{D}}
\newcommand{\FF}{\mathcal{F}}
\newcommand{\GG}{\mathcal{G}}
\newcommand{\II}{\mathcal{I}}
\newcommand{\MM}{\mathcal{M}}
\newcommand{\NN}{\mathcal{N}}
\newcommand{\WW}{\mathcal{W}}
\newcommand{\XX}{\mathcal{X}}
\newcommand{\ZZ}{\mathcal{Z}}
\newcommand{\eps}{\epsilon}
\newcommand{\de}{\delta}
\newcommand{\gam}{\gamma}
\newcommand{\al}{\alpha}
\newcommand{\lam}{\lambda}
\newcommand{\sig}{\sigma}
\newcommand{\De}{\Delta}
\newcommand{\inv}{^{-1}}
\renewcommand{\P}{\mathsf{P}}
\newcommand{\E}{\mathsf{E}}
\newcommand{\comments}[1]{}
\newcommand{\Align}[1]{\begin{align}#1\end{align}}
\newcommand{\Znn}{Z_{\varnothing,\varnothing}}
\newcommand{\Zin}{Z_{i,\varnothing}}
\newcommand{\Znj}{Z_{\varnothing,j}}
\newcommand{\Zij}{Z_{i,j}}
\newcommand{\shc}{\textsc{simpleHC}}
\DeclareMathAlphabet{\mathbbold}{U}{bbold}{m}{n}
\newcommand{\ind}{\mathbbold{1}}
\newcommand{\N}{p}
\newcommand{\bzero}{{\bf 0}}
\newcommand{\thresh}{\textsc{StrongRepelling}}
\newcommand{\CondThresh}{\textsc{WeakRepelling}}
\newcommand{\Ot}{\widetilde{O}}
 \title{Structure learning of \\ antiferromagnetic Ising models}
\author{
Guy Bresler\textsuperscript{1}  \quad David Gamarnik\textsuperscript{2} \quad Devavrat Shah\textsuperscript{1}
\\ \\
Laboratory for Information and Decision Systems
\\
Department of Electrical Engineering and Computer Science\textsuperscript{1} \\
Operations Research Center and Sloan School of Management\textsuperscript{2}
\\ 
Massachusetts Institute of Technology\\
\texttt{\{gbresler,gamarnik,devavrat\}@mit.edu}
}
\begin{document}
\maketitle

\begin{abstract}
In this paper we investigate the computational complexity of learning the graph structure underlying a discrete undirected graphical model from i.i.d. samples. Our first result is an unconditional computational lower bound of $\Omega (p^{d/2})$ for learning general graphical models on $p$ nodes of maximum degree $d$, for the class of so-called statistical algorithms recently introduced by Feldman et al.~\cite{feldman2013statistical}. The construction is equivalent to the notoriously difficult learning parities with noise problem in computational learning theory. Our lower bound suggests that the $\Ot(p^{d+2})$ runtime required by Bresler, Mossel, and Sly's~\cite{BMS08} exhaustive-search algorithm cannot be significantly improved without restricting the class of models.
 
Aside from structural assumptions on the graph such as it being a tree, hypertree, tree-like, etc., many recent papers on structure learning assume that the model has the correlation decay property. Indeed, focusing on ferromagnetic Ising models, Bento and Montanari~\cite{bento2009graphical} showed that all known low-complexity algorithms fail to learn simple graphs when the interaction strength exceeds a number related to the correlation decay threshold.
Our second set of results gives a class of repelling (antiferromagnetic) models that have the \emph{opposite} behavior: very strong interaction allows efficient learning in time $\Ot(p^2)$. We provide an algorithm whose performance interpolates between $\Ot(p^2)$ and $\Ot(p^{d+2})$ depending on the strength of the repulsion. 
\end{abstract}

\section{Introduction}
%
%

Graphical models have had tremendous impact in a variety of application domains. For unstructured high-dimensional distributions, such as in social networks, biology, and finance, an important first step is to determine which graphical model to use. In this paper we focus on the problem of structure learning: Given access to $n$ independent and identically distributed samples $\sig^{(1)},\dots \sig^{(n)}$ from an undirected graphical model representing a discrete random vector $\sig=(\sig_1,\dots,\sig_p)$, the goal is to find the graph $G$ underlying the model. Two basic questions are 1) How many samples are required? and 2) What is the computational complexity?

In this paper we are mostly interested in the computational complexity of structure learning. We first consider the problem of learning a general discrete undirected graphical model of bounded degree.

\subsection{Learning general graphical models}
Several algorithms based on exhaustively searching over possible node neighborhoods have appeared in the last decade \cite{abbeel2006learning,BMS08,csiszar2006consistent}.
Abbeel, Koller, and Ng~\cite{abbeel2006learning} gave algorithms for learning general graphical models close to the true distribution in Kullback-Leibler distance. Bresler, Mossel, and Sly~\cite{BMS08} presented algorithms guaranteed to learn the true underlying graph. The algorithms in both \cite{abbeel2006learning} and \cite{BMS08} perform a search over candidate neighborhoods, and for a graph of maximum degree $d$, the computational complexity for recovering a graph on $p$ nodes scales as $\Ot(p^{d+2})$ (where the $\widetilde O$ notation hides logarithmic factors).

While the algorithms in \cite{BMS08} are guaranteed to reconstruct general models under basic nondegeneracy conditions using an optimal number of samples $n=O(d\log p)$ (sample complexity lower bounds were proved by Santhanam and Wainwright \cite{santhanam2012information} as well as \cite{BMS08}), the exponent $d$ in the $\Ot(p^{d+2})$ run-time is impractically high even for constant but large graph degrees. This has motivated a great deal of work on structure learning for special classes of graphical models. But before giving up on general models, we ask the following question:\\ \\
{{\bf Question 1:} Is it possible to learn the structure of general graphical models on $p$ nodes with maximum degree $d$ using substantially less computation than $p^d$?} \\

Our first result suggests that the answer to Question~1 is negative. We show an unconditional computational lower bound of $p^{\frac{d}{2}}$ for the class of \emph{statistical algorithms} introduced by Feldman et al. \cite{feldman2013statistical}. This class of algorithms was introduced in order to understand the apparent difficulty of the Planted Clique problem, and is based on Kearns' statistical query model \cite{kearns1998efficient}. Kearns showed in his landmark paper that statistical query algorithms require exponential computation to learn parity functions subject to classification noise, and our hardness construction is related to this problem. 
Most known algorithmic approaches (including Markov chain Monte Carlo, semidefinite programming, and many others) can be implemented as statistical algorithms, so the lower bound is fairly convincing. 

We give background and prove the following theorem in Section~\ref{sec:STAT}.

\begin{theorem}\label{t:STAT}
Statistical algorithms require at least $\Omega (p^{\frac d2})$ computation steps in order to learn the structure of a general graphical models of degree $d$.
\end{theorem}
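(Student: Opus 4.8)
The plan is to realize structure learning as a distributional search problem in the sense of Feldman et al.~\cite{feldman2013statistical} and to exhibit a family of degree-$d$ graphical models whose learning problem is, up to the usual reduction, the problem of learning $\lceil d/2\rceil$-sparse parities with noise over $p$ variables. Kearns'~\cite{kearns1998efficient} statistical-query lower bound (in the $p^{\Omega(d)}$-query form needed for sparse secrets) then yields the barrier. Concretely, I would (i) recall the statistical-algorithm model and the generic statistical-dimension lower bound; (ii) build a planted-parity gadget that is a valid maximum-degree-$d$ graphical model and over which the hidden part of the structure ranges over $\Om(p^{d/2})$ possibilities; (iii) show that these distributions are pairwise nearly uncorrelated relative to a fixed product reference, so the statistical dimension is $\Om(p^{d/2})$; and (iv) invoke the generic bound, noting that the number of oracle queries lower-bounds the running time.

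First I would set up the model: a statistical algorithm accesses the unknown distribution of $\sig$ only through an oracle (such as $\mathrm{VSTAT}(t)$ or the $1$-$\mathrm{STAT}$ oracle) which, on a query function $\phi:\{\pm1\}^p\to[0,1]$, returns $\E_\sig[\phi(\sig)]$ up to error of order $t^{-1/2}$; the complexity is the number of queries, which is at most the number of computation steps. The tool is the generic theorem of \cite{feldman2013statistical}: a search problem that contains a reference distribution $D_0$ together with $m$ alternatives whose pairwise correlations relative to $D_0$ are tiny forces any statistical algorithm to issue $\Om(m)$ queries (with a matching modest oracle precision). I will only use this as a black box.

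Next the construction. Fix a noise level $\eta\in(0,\tfrac12)$ and an \emph{anchor} set $F\subseteq[p]$ with $|F|=\lfloor d/2\rfloor+1$. For each $S\subseteq[p]\setminus F$ with $|S|=\lceil d/2\rceil$, let $D_S$ be the distribution on $\{\pm1\}^p$ under which all spins are independent and uniform except that the parity $\prod_{i\in F\cup S}\sig_i$ is biased to $+1$ with probability $1-\eta$; that is, $D_S=2^{-p}\bigl(1+(1-2\eta)\chi_{F\cup S}\bigr)$ with $\chi_A(\sig)=\prod_{i\in A}\sig_i$. Each $D_S$ is realized by a graphical model whose unique multi-variable factor is supported on the clique $F\cup S$ of size $\lfloor d/2\rfloor+\lceil d/2\rceil+1=d+1$, so the underlying graph has maximum degree exactly $d$, and recovering the graph of $D_S$ is exactly recovering $S$. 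Take $D_0=2^{-p}$ (the empty graph). Since distinct $S\neq S'$ give distinct characters $\chi_{F\cup S}\neq\chi_{F\cup S'}$, the ratios $D_S/D_0-1=(1-2\eta)\chi_{F\cup S}$ are orthogonal in $L^2(D_0)$, so the pairwise correlations relative to $D_0$ vanish for $S\neq S'$, while $m=\binom{p-|F|}{\lceil d/2\rceil}=\Om(p^{d/2})$. Feeding this into the generic lower bound (choosing the oracle precision $t$ so that $(1-2\eta)^2\gg t^{-1/2}$, which a constant $t$ achieves for constant $\eta$) shows that any statistical algorithm identifying $S$ — hence learning the structure — must make $\Om(p^{d/2})$ queries, and therefore takes at least that many computation steps. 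Equivalently, one may phrase the whole thing as a reduction: a structure-learning algorithm for maximum-degree-$d$ graphical models yields a statistical-query learner for $\lceil d/2\rceil$-sparse parities with noise, and Kearns' bound finishes the argument.

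I expect the crux to be step (ii)--(iii) in tandem: one must design the gadget so that the hidden part of the structure genuinely ranges over a family of size $p^{\Om(d)}$ \emph{while} the maximum degree stays at $d$ — this is exactly what forces the exponent $d/2$ rather than $d$ — and simultaneously guarantee near-orthogonality relative to a single fixed reference, so that the statistical dimension is $\Om(p^{d/2})$ and not merely $\poly(p)$. The Fourier orthogonality of parities is what makes near-orthogonality hold; the remaining work is bookkeeping, namely checking that the hypotheses of the Feldman--Grigorescu--Reyzin--Vempala--Xiao lower bound are met in the form needed (search versus decision formulation, choice of oracle, and the role of the noise parameter $\eta$), and confirming that planting a single clique factor leaves the rest of the graph empty so that "learning the structure'' is unambiguously "outputting $F\cup S$''.
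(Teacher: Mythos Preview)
Your overall plan---encode parities as graphical models, exploit Fourier orthogonality, and invoke the Feldman et al.\ statistical-dimension bound---is exactly the paper's strategy. But there is a genuine gap in your execution, and it stems from a misdiagnosis of where the exponent $d/2$ comes from.

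You fix an anchor $F$ and let only $S$ vary, so your family has size $m=\binom{p-|F|}{\lceil d/2\rceil}=\Theta(p^{d/2})$. You then assert that $m$ pairwise-orthogonal alternatives force $\Omega(m)$ oracle calls. That is \emph{not} what the unbiased-oracle bound (the paper's Theorem~\ref{t:sampleBound}) says. With orthogonal alternatives and self-correlation $\Theta(1)$, a subset $\DD'$ of size $k$ has average correlation $\rho(\DD',D_0)\leq C/k$, so $\mathrm{SDA}(\ZZ,\gamma,\eta)\approx \gamma m$; the resulting query bound is $\min\{\gamma m,\,1/\gamma\}$, which is maximized at $\gamma\approx m^{-1/2}$ and equals only $\Theta(\sqrt m)$. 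Your construction therefore yields $\Theta(p^{d/4})$, not $\Theta(p^{d/2})$. (A VSTAT-with-constant-tolerance argument can give $\Omega(m)$ queries, but the paper states and proves Theorem~\ref{t:STAT} for the unbiased oracle, and you listed 1-STAT among your oracles.)

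The paper avoids this loss precisely by \emph{not} anchoring: it lets the hidden set range over \emph{all} $\binom{p}{d}\approx p^{d}$ size-$d$ subsets, taking $p_S\propto\exp(c\,\chi_S)$ (equivalently $p_S/U=1+\tanh(c)\,\chi_S$, the same form as your $D_S$). The $\sqrt m$ in the SDA bound then gives $p^{d/2}$. In other words, the $d/2$ does not come from a degree constraint forcing only half the clique to be hidden---a clique on $d$ (or $d{+}1$) vertices already has maximum degree $\le d$---but from the $\min\{\ell,1/\gamma\}$ trade-off in the statistical-dimension theorem. Dropping the anchor and enlarging your family to all size-$d$ subsets fixes the argument and recovers the paper's proof verbatim.
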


If complexity $p^d$ is to be considered intractable, what shall we consider as tractable? Writing algorithm complexity in the form $c(d) p^{f(d)}$, for high-dimensional (large $p$) problems the exponent $f(d)$ is of primary importance, and we will think of tractable algorithms as having an $f(d)$ that is bounded by a constant independent of $d$. The factor $c(d)$ is also important, and we will use it to compare algorithms with the same exponent $f(d)$. 

In light of Theorem~\ref{t:STAT}, reducing computation below $p^{\Omega(d)}$ requires restricting the class of models. 
One can either restrict the graph structure or the nature of the interactions between variables. The seminal paper of Chow and Liu \cite{chow1968approximating} makes a model restriction of the first type, assuming that the graph is a tree; generalizations include to polytrees \cite{dasgupta1999learning}, hypertrees \cite{srebro2001maximum}, and others. Among the many possible assumptions of the second type, the correlation decay property is distinguished: 
to the best of our knowledge all existing low-complexity algorithms require the correlation decay property~\cite{bento2009graphical}.

\subsection{Correlation decay property}
Informally, a graphical model is said to have the correlation decay property (CDP) if any two variables $\sig_s$ and $\sig_t$ are asymptotically independent as the graph distance between $s$ and $t$ increases. Exponential decay of correlations holds when the distance from independence decreases exponentially fast in graph distance, and we will mean this stronger form when referring to correlation decay. Correlation decay is known to hold for a number of pairwise graphical models in the so-called high-temperature regime, including Ising, hard-core lattice gas, Potts (multinomial) model, and others (see, e.g., \cite{dobrushin1970prescribing,dobrushin1985constructive,salas1997absence,gamarnik2013correlation,gamarnik2007correlation,Weitz}). 

Bresler, Mossel, and Sly~\cite{BMS08} observed that it is possible to efficiently learn models with (exponential) decay of correlations, under the additional assumption that neighboring variables have correlation bounded away from zero (as is true, e.g., for the ferromagnetic Ising model in the high temperature regime). The algorithm they proposed for this setting pruned the candidate set of neighbors for each node to roughly size $O(d)$ by retaining only those variables with sufficiently high correlations, and then within this set performed the exhaustive search over neighborhoods mentioned before, resulting in a computational cost of $d^{O(d)}\Ot(p^2)$. The greedy algorithms of Netrapali et al. \cite{netrapalli2010greedy} and Ray et al. \cite{raygreedy} also require the correlation decay property and perform a similar pruning step by retaining only nodes with high pairwise correlation; they then use a different method to select the true neighborhood. 

A number of papers consider the problem of reconstructing Ising models on graphs with few short cycles, beginning with Anandkumar et al. \cite{anandkumar2012high}. Their results apply to the case of Ising models on sparsely connected graphs such as the Erd\"os-Renyi random graph $\GG(p,\frac dp)$. They additionally require the interaction parameters to be either generic or ferromagnetic. Ferromagnetic models have the benefit that neighbors always have a non-negligible correlation because the dependencies cannot cancel, but in either case the results still require the CDP to hold. Wu et al. \cite{wu_2013_learning} remove the assumption of generic parameters in \cite{anandkumar2012high}, but again require the CDP. 

Other algorithms for structure learning are based on convex optimization, such as Ravikumar et al.'s \cite{ravikumar2010high} approach using regularized node-wise logistic regression. While this algorithm does not explicitly require the CDP, Bento and Montanari \cite{bento2009graphical} found that the logistic regression algorithm of \cite{ravikumar2010high} provably fails to learn certain ferromagnetic Ising model on simple graphs not satisfying the CDP. Other convex optimization-based algorithms such as \cite{lee2006efficient,jalali2011learning,jalali2011learning2}
require similar incoherence or restricted isometry-type conditions that are difficult to verify, but likely also require the CDP. 
Since all known algorithms for structure learning require the CDP, we ask the following question (paraphrasing Bento and Montanari): 
\\ \\
{{\bf Question 2:} Is low-complexity structure learning possible for models which do not exhibit the CDP, on general bounded degree graphs?\\

Our second main result answers this question affirmatively by showing that a broad class of repelling models on general graphs can be learned using simple algorithms, even when the underlying model does not exhibit the CDP.

\subsection{Repelling models}
The antiferromagnetic Ising model has a negative interaction parameter, whereby neighboring nodes prefer to be in opposite states. Other popular antiferromagnetic models include the Potts or coloring model, and the hard-core model.

Antiferromagnetic models have the interesting property that correlations between neighbors can be zero due to cancellations. Thus algorithms based on pruning neighborhoods using pairwise correlations, such as the algorithm in \cite{BMS08} for models with correlation decay, does not work for anti-ferromagnetic models. To our knowledge there are no previous results that improve on the $p^{d}$ computational complexity for structure learning of antiferromagnetic models on general graphs of maximum degree $d$. 

Our first learning algorithm, described in Section~\ref{sec:HC}, is for the hard-core model.

\begin{theorem}[Informal]
It is possible to learn strongly repelling models, such as the hard-core model, with run-time $\Ot(p^2)$. 
\end{theorem}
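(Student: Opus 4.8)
The plan is to exploit the defining feature of the hard-core model: an assignment $\sigma\in\{0,1\}^p$ receives positive probability if and only if its support $\{v:\sigma_v=1\}$ is an independent set of $G$. Hence for any pair of vertices $s\neq t$ the event $\{\sigma_s=1,\sigma_t=1\}$ has probability zero precisely when $(s,t)\in E$, and strictly positive probability otherwise. The algorithm is then the obvious one: given samples $\sigma^{(1)},\dots,\sigma^{(n)}$, output the graph $\widehat G=(V,\widehat E)$ with
\[
\widehat E \;=\; \bigl\{(s,t): \text{no sample } i \text{ has } \sigma^{(i)}_s=\sigma^{(i)}_t=1\bigr\}.
\]
For each of the $\binom p2$ pairs this is a single pass over the $n$ samples inspecting two coordinates, so the running time is $O(np^2)$; the substance of the theorem is that $n=O(\log p)$ suffices, with the hidden constant depending only on the maximum degree $d$ and the fugacity $\lambda$, giving total time $\Ot(p^2)$.

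The sample bound reduces to showing that every non-edge is witnessed with constant probability in a single draw, i.e.\ lower bounding $\mu_{\min}:=\min_{(s,t)\notin E}\P(\sigma_s=1,\sigma_t=1)$. Here I would use the Markov property: conditioned on the states of $N(s)\cup N(t)$, the variables $\sigma_s$ and $\sigma_t$ are independent (because $s$ and $t$ are non-adjacent), each equal to $1$ with probability $\tfrac{\lambda}{1+\lambda}$ when none of its neighbors is ON and $0$ otherwise, so that
\[
\P(\sigma_s=1,\sigma_t=1)\;=\;\Bigl(\tfrac{\lambda}{1+\lambda}\Bigr)^2\,\P\bigl(\sigma_v=0\ \text{for all}\ v\in N(s)\cup N(t)\bigr).
\]
A short peeling argument finishes this: expose the vertices of $W:=N(s)\cup N(t)$ one at a time, writing $\P(\sigma_W=0)=\prod_i\P\bigl(\sigma_{w_i}=0\mid \sigma_{w_1}=\cdots=\sigma_{w_{i-1}}=0\bigr)$; conditioning a set of spins to be $0$ simply deletes those vertices, leaving a hard-core model in which every single spin is $0$ with probability at least $\tfrac1{1+\lambda}$, so $\P(\sigma_W=0)\ge(1+\lambda)^{-|W|}\ge(1+\lambda)^{-2d}$. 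Thus $\mu_{\min}\ge \lambda^2(1+\lambda)^{-(2d+2)}$, a constant independent of $p$.

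The rest is a union bound. For $(s,t)\in E$ the test never produces a spurious witness, so $E\subseteq\widehat E$ deterministically. For $(s,t)\notin E$, the chance that none of the $n$ i.i.d.\ samples witnesses the pair is at most $(1-\mu_{\min})^n\le e^{-n\mu_{\min}}$; choosing $n=\bigl\lceil \mu_{\min}^{-1}(2\log p+\log(1/\delta))\bigr\rceil=O(\log(p/\delta))$ and taking a union bound over the at most $\binom p2$ non-edges gives $\widehat E=E$ with probability at least $1-\delta$. Together with the $O(np^2)$ running time this proves the claim.

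The only genuinely \emph{delicate} step is the lower bound on $\P(\sigma_W=0)$: this is exactly the place where analyses of weakly interacting models invoke correlation decay, and the point is that for a hard-constraint model the monotone structure lets one bypass it with the elementary conditioning estimate above, at the price of a constant that degrades like $(1+\lambda)^{2d}$. (The same argument applies to other hard-constraint repelling models, such as proper colorings, by testing the analogous "forbidden coincidence" event; and for soft but very strong repulsion one replaces ``probability zero'' by ``probability below a threshold'', which is the source of the interpolation between $\Ot(p^2)$ and $\Ot(p^{d+2})$.)
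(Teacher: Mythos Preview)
Your proposal is correct and matches the paper at the level of the algorithm and the outer union-bound wrapper: declare $(s,t)$ a non-edge whenever some sample has both coordinates equal to $1$, and take $n=O(\mu_{\min}^{-1}\log p)$ samples so that every non-edge is witnessed with high probability. Where you diverge from the paper is in how you establish the key lower bound on $\mu_{\min}=\min_{(s,t)\notin E}\P(\sigma_s=\sigma_t=1)$.

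The paper (Lemma~\ref{l:edgeLowerBound}) argues combinatorially at the level of the partition function: it splits $Z=Z_{\varnothing\varnothing}+Z_{i\varnothing}+Z_{\varnothing j}+Z_{ij}$ and controls each piece relative to $Z_{ij}$ via a ``switching'' map $I\mapsto \{i\}\cup I\setminus \NN(i)$, bounding both the multiplicity ($\le 2^d$) and the weight change ($\le \max\{1,\lambda^{d-1}\}$) to obtain $Z_{ij}/Z\ge (2^{2d+1}\max\{1,\lambda^{2d}\})^{-1}$. Your route is probabilistic: use the Markov property at the non-edge $(s,t)$ to factor $\P(\sigma_s=\sigma_t=1)=\big(\tfrac{\lambda}{1+\lambda}\big)^2\P(\sigma_{N(s)\cup N(t)}=\bzero)$, then peel the conditioning one vertex at a time, invoking self-reducibility together with the one-site bound $\P(\sigma_v=0)\ge (1+\lambda)^{-1}$ (which follows since $\P(\sigma_v=1)\le\P(\sigma_v=1\mid\sigma_{N(v)}=\bzero)=\lambda/(1+\lambda)$). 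This yields $\mu_{\min}\ge \lambda^2(1+\lambda)^{-(2d+2)}$. Your argument is a bit more direct and gives a slightly cleaner constant for large $\lambda$ (no separate $2^{2d}$ factor); it is also exactly the mechanism the paper itself uses later for the soft antiferromagnetic model (Fact~\ref{fact:selfRed} and Lemma~\ref{l:nodeRemovalLemma}), so in effect you have unified the hard-core and soft analyses under one peeling estimate. The paper's partition-function map, on the other hand, does not require the intermediate Markov factorization and makes the cardinality counting completely explicit.
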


We extend this result to weakly repelling models (equivalent to the antiferromagnetic Ising model parameterized in a nonstandard way, see Section~\ref{sec:antiFerro}). Here $\beta$ is a repelling strength and $h$ is an external field. 
\begin{theorem}[Informal]
Suppose $\beta\geq (d-\alpha)(h +\ln 2)$ for a nonnegative integer $\al$. Then it is possible to learn an antiferromagnetic Ising model with interaction $\beta$, with run-time $\Ot(p^{2+\al})$.
\end{theorem}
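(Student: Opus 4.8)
I will work with the model written as $\P(\sig)\propto\exp\bigl(h\,\mathbf 1^{\!\top}\sig-\be\sum_{(u,v)\in E}\sig_u\sig_v\bigr)$ for $\sig\in\{0,1\}^p$ on a graph of maximum degree $d$, writing $\lam:=e^{h}$, so that $\be\to\infty$ recovers the hard-core model of the previous theorem. Everything will rest on the single-site conditional $\P(\sig_s=1\mid\sig_{V\setminus s})=\frac{\lam e^{-\be\|\sig_{N(s)}\|_1}}{1+\lam e^{-\be\|\sig_{N(s)}\|_1}}$, which depends on $\sig_{N(s)}$ only, and on two consequences of it: (i) for every node $v$ and every conditioning event one has $\P(\sig_v=0\mid\cdot)\ge\frac1{1+\lam}$, since the logistic factor is at most $\frac{\lam}{1+\lam}$ when $\be\ge0$, hence $\P(\sig_S=\mathbf 0\mid\cdot)\ge(1+\lam)^{-|S|}$ for any $S$ by the chain rule; and (ii) if some neighbor $w\in N(v)$ has $\sig_w=1$ then $\P(\sig_v=1\mid\cdot)\le\frac{\lam e^{-\be}}{1+\lam e^{-\be}}$.

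\emph{The algorithm.} For every pair $s\ne t$ and every $U\subseteq V\setminus\{s,t\}$ with $|U|\le\al$, form the empirical estimate $\widehat q_{s,t,U}$ of the conditional probability $\P(\sig_s=1\mid\sig_t=1,\ \sig_U=\mathbf 0)$, and declare $(s,t)$ an edge if and only if $\max_{U}\widehat q_{s,t,U}<\tau$, where $\tau$ is any fixed value strictly between $\frac{\lam e^{-\be}}{1+\lam e^{-\be}}$ and $\lam(1+\lam)^{-(d-\al+1)}$. Ranging over the $\binom p2$ pairs and the $O(p^{\al})$ admissible sets, and spending $O(n)$ time per triple, the total cost is $\Ot(p^{2+\al})$ once we know (below) that $n=c(d,\lam,\be)\log p$ samples suffice.

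\emph{Correctness.} If $(s,t)\in E$ then $t\in N(s)$, so conditioning on $\sig_t=1$ triggers (ii) and gives $\P(\sig_s=1\mid\sig_t=1,\sig_U=\mathbf 0)\le\frac{\lam e^{-\be}}{1+\lam e^{-\be}}<\tau$ for \emph{every} $U$, so the edge is declared. If $(s,t)\notin E$, take $U^{*}$ to be any $\min(\al,\deg(s))$ neighbors of $s$. Under the event $\{\sig_t=1,\sig_{U^{*}}=\mathbf 0\}$ the neighbors of $s$ in $U^{*}$ are off; by (i) the at most $d-\al$ remaining neighbors of $s$ are all off with probability $\ge(1+\lam)^{-(d-\al)}$; and on that event $\sig_s=1$ with probability $\frac{\lam}{1+\lam}$ --- this is where $t\notin N(s)$ is used, since then $\sig_t=1$ adds nothing to $\|\sig_{N(s)}\|_1$. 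Hence the true $q_{s,t,U^{*}}\ge\lam(1+\lam)^{-(d-\al+1)}>\tau$, so the max exceeds $\tau$ and the non-edge is rejected. The two inequalities are simultaneously satisfiable exactly when $\frac{\lam e^{-\be}}{1+\lam e^{-\be}}<\lam(1+\lam)^{-(d-\al+1)}$, for which $\be>(d-\al+1)\ln(1+\lam)$ is enough; since $\ln(1+\lam)\le\ln(2\lam)=h+\ln2$ for $h\ge0$, this is what the hypothesis $\be\ge(d-\al)(h+\ln2)$ buys, modulo the extra unit in the exponent.

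\emph{From population to samples, and the hard part.} Each conditioning event $\{\sig_t=1,\sig_U=\mathbf 0\}$ has probability at least $\lam(1+\lam)^{-(d+\al+1)}$ by (i) and the logistic lower bound, independently of $p$; so a Hoeffding bound together with a union bound over the $\le p^{2+\al}2^{\al}$ estimates shows that $n=c(d,\lam,\be)\log p$ i.i.d.\ samples make all $\widehat q_{s,t,U}$ accurate to a third of the gap left by $\tau$ with high probability, and the output is then exactly $E$. The step I expect to fight with is not the structure but the constant: the transparent argument above is off by one in the exponent ($d-\al+1$ rather than $d-\al$) and only covers $h\ge0$, and recovering the stated $(d-\al)(h+\ln2)$ for all $h$ will require a sharper lower bound on $q_{s,t,U^{*}}$ for non-edges --- keeping $\|\sig_{N(s)\setminus U^{*}}\|_1$ merely small instead of forcing it to zero, and being careful about the $\{0,1\}$ versus $\pm1$ normalization of the model.
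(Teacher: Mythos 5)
Your proposal is correct and takes essentially the same route as the paper's \textsc{WeakRepelling} analysis: the identical algorithm (threshold the empirical conditionals $\widehat{\P}(\sig_a=1\mid\sig_b=1,\sig_U=\mathbf{0})$ maximized over all $U$ of size at most $\al$, exploiting that $U\subseteq N(a)$ reduces the effective degree to $d-\al$), the same edge/non-edge separation of conditional probabilities, and the same sample-complexity argument via a lower bound on the probability of each conditioning event plus a union bound, giving $n=O(\log p)$ and run-time $\Ot(p^{2+\al})$; your only stylistic difference is proving the probability bounds by pointwise logistic conditionals rather than the paper's injection maps between restricted partition sums combined with self-reducibility. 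The off-by-one you worry about is immaterial: the paper's formal theorem itself requires $\beta\geq(d+2-\al)(h+\ln 2)$, so your $(d-\al+1)(h+\ln 2)$ is if anything slightly sharper, with your only genuine simplifications being the uniform field and $h\geq 0$, which the paper's argument handles for general $|h_i|\leq h$.
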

The computational complexity of the algorithm interpolates between $\Ot(p^2)$, achievable for strongly repelling models, and  $\Ot(p^{d+2})$, achievable for general models using exhaustive search. The complexity depends on the repelling strength of the model, rather than structural assumptions on the graph as in \cite{anandkumar2012high,wu_2013_learning}. 

We remark that the strongly repelling models exhibit long-range correlations, yet the algorithmic task of graph structure learning is possible using a certain local procedure.

The focus of this paper is on structure learning, but the problem of parameter estimation is equally important. It turns out that the structure learning problem is strictly more challenging for the models we consider: once the graph is known, it is not difficult to estimate the parameters with low computational complexity (see, e.g., \cite{abbeel2006learning}). 

\section{Learning the graph of a hard-core model}
\label{sec:HC}

We warm up by considering the hard-core (independent set) model. 
The analysis in this section is straightforward, but serves as an example to highlight the fact that the CDP is not a necessary condition for structure learning. 

Given a graph $G=(V,E)$ on $|V|=p$ nodes, denote by $\II(G)\subseteq \{0,1\}^p$ the set of independent set indicator vectors $\sig$, for which at least one of $\sig_i$ or $\sig_j$ is zero for each edge $\{i,j\}\in E(G)$. The hardcore model with fugacity $\lambda>0$ assigns nonzero probability only to vectors in $\II(G)$, with
\begin{equation}
\label{e:HC}
\P(\sig) = \frac{\lambda^{|\sig|}}Z\,,\quad \sig\in \II(G)\,.
\end{equation}
Here $|\sig|$ is the number of entries of $\sig$ equal to one and $Z=\sum_{\sig\in \II(G)} \lam^{|\sig|}$ is the normalizing constant called the partition function. If $\lam>1$ then more mass is assigned to larger independent sets. (We use indicator vectors to define the model in order to be consistent with the antiferromagnetic Ising model in the next section.)

Our goal is to learn the graph $G=(V,E)$ underlying the model (\ref{e:HC}) given access to independent samples 
$\sig^{(1)},\ldots,\sig^{(n)}$. 
The following simple algorithm reconstructs $G$ efficiently.

\begin{algorithm}[H]
\caption{\shc} 
\texttt{Input: $n$ samples $(\sig^{(1)},\dots,\sig^{(n)})\in\{0,1\}^p$. Output: edge set $\widehat E$.\\
1: \texttt{Let} $S=\varnothing$\\
2:
For each $i,j,k$:
\\
3:\quad  
If $\sig^{(k)}_i = \sig^{(k)}_j =1$, then $S \leftarrow S\cup \{i,j\}$
\\
4: Output $\widehat E = S^c$}
\end{algorithm}

{
The idea behind the algorithm is very simple. If $\{i,j\}$ belongs to the edge set $E(G)$, then for every sample $\sig^{(k)}$ either $\sig^{(k)}_i=0$ or $\sig^{(k)}_j=0$ (or both).
Thus for every $i,j$ and $k$ such that $\sig^{(k)}_i=\sig^{(k)}_j=1$ we can safely declare $\{i,j\}$ \emph{not} to be an edge. To show correctness of the algorithm it is therefore sufficient to argue that for every non-edge $\{i,j\}$ there 
is a high likelihood that such an independent set $\sig^{(k)}$ will be sampled.
}

Before doing this, we observe that 
{\shc } actually computes the maximum-likelihood estimate for the graph $G$. To see this, note that an edge $e=\{i,j\}$ for which $\sig^{(k)}_i=\sig^{(k)}_j=1$ for some $k$ cannot be in $\widehat G$, since $\P(\sig^{(k)} | \widehat G + e)=0$ for any $\widehat G$. Thus the ML estimate contains a subset of those edges $e$ which have not been ruled out by $\sig^{(1)},\dots, \sig^{(n)}$. But adding any such edge $e$ to the graph \emph{decreases} the value of the partition function in \eqref{e:HC} (the sum is over fewer independent sets), thereby increasing the likelihood of each of the samples.

The sample complexity and computational complexity of {\shc } is as follows:

\begin{theorem}\label{t:HCsample} 
Consider the hard-core model \eqref{e:HC} on a graph $G=(V,E)$ on $|V|=p$ nodes and with maximum degree $d$.
The sample complexity of {\shc } is
\Align{
	\label{e:HCsample}
n = O( 2^{2d}\max\{1,\lam^{2d}\} \log \N) \,,
}
i.e. with this many samples the algorithm {\shc } correctly reconstructs the graph with probability $1-o(1)$.
The computational complexity is
\Align{
O(n \N^2)= O((2\lam)^{2d-2} \N^2\log \N)\,.
}
\end{theorem}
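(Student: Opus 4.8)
The plan is to exploit the one-sided nature of \shc. Since for a true edge $\{i,j\}$ no independent set can have $\sig_i=\sig_j=1$, the algorithm never discards a true edge, so $\widehat E\supseteq E$ always; the output is wrong only if some non-edge $\{i,j\}\notin E$ is never \emph{witnessed}, i.e. in all $n$ samples at least one of $\sig^{(k)}_i,\sig^{(k)}_j$ equals zero. The proof then reduces to (a) a uniform lower bound $q$ on $q_{ij}:=\P(\sig_i=\sig_j=1)$ over non-edges, and (b) a union bound over the at most $\binom p2$ non-edges, followed by the trivial runtime count.

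For (a) the key lemma I would establish is $q_{ij}\ge \lambda^{2}/\bigl(2^{2d+2}\max\{1,\lambda^{2d}\}\bigr)$ for every $\{i,j\}\notin E$. Set $W=N(i)\cup N(j)$ (so $|W|\le 2d$ and $i,j\notin W$), $U=W\cup\{i,j\}$, $R=V\setminus U$. Since $N(i),N(j)\subseteq W$, no vertex of $R$ is adjacent to $i$ or $j$, so the configuration with $\sig_i=\sig_j=1$, $\sig_W\equiv 0$, $\sig_R=\tau$ lies in $\II(G)$ exactly when $\tau\in\II(G[R])$; hence $q_{ij}=\lambda^{2}Z_{G[R]}/Z_G$ where $Z_H:=\sum_{\tau\in\II(H)}\lambda^{|\tau|}$. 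For the denominator I group $\II(G)$ by the restriction $\rho=\sig|_U$: the extensions of $\rho$ to $R$ that lie in $\II(G)$ form a subset of $\II(G[R])$, so each group has weight at most $\lambda^{|\rho|}Z_{G[R]}$; there are at most $2^{|U|}\le 2^{2d+2}$ restrictions, and a short case check (if $\sig_i$ or $\sig_j$ is on, correspondingly fewer vertices of $W$ can be on) shows $|\rho|\le 2d$ whenever $\rho$ extends to an independent set, so $\lambda^{|\rho|}\le\max\{1,\lambda^{2d}\}$. Combining, $Z_G\le 2^{2d+2}\max\{1,\lambda^{2d}\}\,Z_{G[R]}$, and the lemma follows (with $Z_{G[R]}\ge 1$, so no division by zero). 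An equivalent route conditions on $\{\sig_W\equiv 0\}$, lower-bounds its probability by $1/(2^{2d}\max\{1,\lambda^{2d}\})$ via the same weight comparison, and notes that conditionally $\sig_i,\sig_j$ are independent $\mathrm{Bernoulli}(\lambda/(1+\lambda))$.

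For (b), each sample witnesses a fixed non-edge independently with probability $q_{ij}\ge q:=\lambda^{2}/(2^{2d+2}\max\{1,\lambda^{2d}\})$, so
\Align{
\P(\widehat E\neq E)\ \le\ \sum_{\{i,j\}\notin E}(1-q_{ij})^{n}\ \le\ \tfrac{p^{2}}{2}\,e^{-qn}\,,\nn
}
which is $o(1)$ as soon as $n\ge 3q^{-1}\ln p$; this gives $n=O(q^{-1}\log p)$, which in the repelling regime $\lambda\ge 1$ simplifies to $O((2\lambda)^{2d-2}\log p)$ and in general is dominated by $O(2^{2d}\max\{1,\lambda^{2d}\}\log p)$, matching \eqref{e:HCsample}. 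Finally, \shc merely iterates over all triples $(i,j,k)\in[p]\times[p]\times[n]$ with constant work each, so it runs in time $O(np^{2})$; substituting the bound on $n$ yields the claimed $O((2\lambda)^{2d-2}p^{2}\log p)$.

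I expect the main obstacle to be the weight-comparison bound on $Z_G$ in the key lemma: it must be arranged so that turning $i$ and $j$ \emph{on} costs only the factor $\lambda^{2}$ in the numerator and does \emph{not} inflate the exponent $2d$ appearing in the denominator — this is exactly what pins the run-time exponent at $2d-2$ rather than $2d$ — and one must also take care, when passing from $\lambda^{|\rho|}$ to $\max\{1,\lambda^{2d}\}$, to cover both $\lambda<1$ and $\lambda\ge 1$ (the degenerate case $d=0$ being handled trivially).
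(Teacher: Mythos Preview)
Your proposal is correct and follows the same overall scheme as the paper: union bound over non-edges, reduced to a uniform lower bound on $\P(\sigma_i=\sigma_j=1)$ for $\{i,j\}\notin E$, followed by the obvious $O(np^2)$ runtime count. The only real difference is in how the key lower bound is obtained. The paper's Lemma~\ref{l:edgeLowerBound} splits $Z$ into the four pieces $Z_{ij},Z_{i\varnothing},Z_{\varnothing j},Z_{\varnothing\varnothing}$ according to the status of $\sigma_i,\sigma_j$ and compares each to $Z_{ij}$ via the at-most-$2^d$-to-one map $I'\mapsto\{i\}\cup I'\setminus\NN(i)$ (and the analogous map for $j$), yielding $\gamma=(2^{2d+1}\max\{1,\lambda^{2d}\})^{-1}$. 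You instead freeze the whole neighborhood $W=N(i)\cup N(j)$ and compare $Z_G$ to $Z_{G[R]}$ by grouping configurations over their restriction to $U=W\cup\{i,j\}$; this gives $q\ge \lambda^{2}/(2^{2d+2}\max\{1,\lambda^{2d}\})$. The two arguments are equivalent up to constants; your explicit $\lambda^{2}$ in the numerator is in fact what makes the stated $(2\lambda)^{2d-2}$ runtime fall out cleanly for $\lambda\ge1$. One small slip: the claim that your $q^{-1}$ is ``in general dominated by $O(2^{2d}\max\{1,\lambda^{2d}\})$'' fails for $\lambda<1$, where $q^{-1}=2^{2d+2}\lambda^{-2}$; but the paper's lemma, read carefully, carries the same $\lambda^{-2}$ (the weight-ratio factor there should be $\max\{\lambda^{-1},\lambda^{d-1}\}$, not $\max\{1,\lambda^{d-1}\}$), so this is not a deficiency of your route relative to the paper's.
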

\begin{proof}
Algorithm c correctly reconstructs the graph $G$ if for every $e=\{i,j\}$ not in $E(G)$, at least one observed independent set vector $\sig^{(k)}$ contains both $i$ and $j$. Let $A_{ij}^k = \{ \sig^{(k)}_i=0 \text{ or } \sig^{(k)}_j=0\}$ be the event that at least one of $i$ or $j$ is missing from $\sig^{(k)}$, and let $A_{ij} = \cap_{k=1}^n A_{ij}^k$. We have by the union bound and independence of $A_{ij}^k$ for different $k$,
\begin{align*}
  \P(\text{error})\leq \P\bigg(\bigcup_{(i,j)\in E^c} A_{ij}\bigg) \leq {\N\choose 2} \P(\cap_{k=1}^n A_{ij}^k) = {\N\choose 2} \P(A_{ij}^1)^n\leq {\N\choose 2} (1-\gamma)^n\,.
\end{align*}
The last inequality is from Lemma~\ref{l:edgeLowerBound}, proved at the end of this section
, with the quantity $\gamma$ defined in the statement of the Lemma.
To make $\P(\text{error})$ approach zero at the rate $1/p$ it suffices to take
$$
n = 3\gamma\inv \log \N\,.
$$
 This proves the theorem.
\end{proof}


We next show that the sample complexity bound in Theorem~\ref{t:HCsample} is basically tight:

\begin{theorem}[Sample complexity lower bound] \label{t:HClower} Consider the hard-core model \eqref{e:HC}. There is a family of graphs on $p$ nodes with maximum degree $d$ such that if the probability of successful reconstruction is above $1/2$, then the number of samples must be
$$
n = \Omega\left((1+\lam)^{d-2}\log p \right)\,.
$$
\end{theorem}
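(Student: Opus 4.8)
The bound is information-theoretic, so the plan is to exhibit a family of hard-core models that is hard to reconstruct and then quantify the hardness via Fano's inequality. The guiding observation is that the same event that governed the \emph{upper} bound governs the lower bound: the model on $G+\{i,j\}$ differs from the model on $G$ only in that it deletes the configurations with $\sig_i=\sig_j=1$, so deciding whether $\{i,j\}$ is an edge is exactly as hard as detecting that one rare event; when $\lam$ is large and $i,j$ sit in a locally dense region, its probability is as small as $(1+\lam)^{-(d-2)}$, forcing roughly $(1+\lam)^{d-2}$ samples per bit of information regardless of the algorithm.

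First I would build the hard family. Reserve $\Theta(p)$ of the vertices and partition them into $N=\Theta(p/d)$ vertex-disjoint \emph{gadgets}, each on $d+2$ vertices: a distinguished pair $i_g,j_g$ together with $d$ ``blocker'' vertices, arranged so that $i_g$ is adjacent to $d-1$ of the blockers, $j_g$ is adjacent to $d-1$ of them, and $d-2$ blockers are shared (hence every blocker has degree at most $2$, and $i_g,j_g$ reach degree exactly $d$ once their mutual edge is added). Fix all intra-gadget edges, and for $\ell\in[N]$ let $G_\ell$ be the graph that in addition contains the single edge $\{i_\ell,j_\ell\}$ and no edge between the distinguished pair of any other gadget (pad with isolated vertices to hit $p$ exactly). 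Since the gadgets are vertex-disjoint, the hard-core measure on any $G_\ell$ factorizes over gadgets. Next comes the only real computation: bounding $q:=\mathrm{TV}(P_{\mathrm{pres}},P_{\mathrm{abs}})$, the total variation distance between one gadget's hard-core law with and without the edge $\{i_g,j_g\}$. Adding that edge deletes exactly the independent sets with $\sig_{i_g}=\sig_{j_g}=1$, and there is precisely one such set in the gadget (both occupied, all blockers empty, weight $\lam^2$); hence $P_{\mathrm{pres}}$ is $P_{\mathrm{abs}}$ conditioned on that set not occurring, so $q=\lam^2/Z$ with $Z$ the gadget partition function. Keeping only the $2^d$ independent sets having $\sig_{i_g}=\sig_{j_g}=0$ already gives $Z\ge(1+\lam)^d$, so $q\le\lam^2(1+\lam)^{-d}\le(1+\lam)^{-(d-2)}$.

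Finally I would apply Fano. Let $\Theta$ be uniform on $[N]$ and let $X^{(1)},\dots,X^{(n)}$ be the samples given $G_\Theta$; a reconstruction algorithm induces an estimator $\widehat\Theta$, and Fano gives $\P(\widehat\Theta=\Theta)\le\big(I(\Theta;X^{(1:n)})+\log 2\big)/\log N$. Since the samples are i.i.d.\ given $\Theta$, $I(\Theta;X^{(1:n)})\le n\,I(\Theta;X^{(1)})$, and the product-over-gadgets structure gives $I(\Theta;X^{(1)})\le\sum_g I(\ind[\Theta=g];X^{(1)}_g)$, where $X^{(1)}_g$ is the restriction of one sample to gadget $g$ (using that the gadgets are conditionally independent given $\Theta$ and that $X^{(1)}_g$ depends on $\Theta$ only through $\ind[\Theta=g]$). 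Each term is the mutual information between a $\mathrm{Bernoulli}(1/N)$ label and an observation whose two conditional laws lie within $q$ in total variation; comparing each conditional law to the mixture $\tfrac1N P_{\mathrm{pres}}+(1-\tfrac1N)P_{\mathrm{abs}}$ and bounding the resulting KL divergences by the corresponding $\chi^2$-divergences should yield $I(\ind[\Theta=g];X^{(1)}_g)=O(q/N)$ and hence $I(\Theta;X^{(1)})=O(q)$. Combining, $\P(\text{success})\le\big(O(nq)+\log 2\big)/\log N$, so if reconstruction succeeds with probability exceeding $1/2$ then $nq=\Omega(\log N)=\Omega(\log p)$ (using $N=\Theta(p/d)$ and $\log N=\Theta(\log p)$ for $d\le p^{1-\Omega(1)}$), i.e.\ $n=\Omega\big((1+\lam)^{d-2}\log p\big)$ since $1/q\ge(1+\lam)^{d-2}$.

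I expect the crux to be this mutual-information estimate. The obvious route of comparing $P_{\mathrm{pres}}$ to $P_{\mathrm{abs}}$ directly (equivalently, $G_\ell$ to $G_{\ell'}$) is illegitimate, because $P_{\mathrm{abs}}$ charges a configuration that $P_{\mathrm{pres}}$ does not, making the KL divergence infinite; the argument must be routed through the mixture (or recast via a Hellinger/Le Cam multiple-hypothesis bound), and one then has to verify that the per-sample information is only $O(q)$, not $O(Nq)$ — the content being that a single sample reveals anything only with probability $\approx Nq$, and when it does it eliminates just one of the $N$ candidate gadgets, worth about $\tfrac1N$ of a bit. The remaining points — that the gadget respects the degree bound while $\Theta(p)$ disjoint copies still fit, and that the $\chi^2$ bounds go through — are routine and are exactly what the construction above is designed to make transparent.
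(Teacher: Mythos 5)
Your proof is correct, but it follows exactly the route the paper mentions and then deliberately sidesteps: the authors remark that Fano's inequality would work, and instead give a direct argument tied to the optimal decoder. Concretely, the paper's family consists of $m=\Theta(p/d)$ disjoint stars of degree $d-1$, with the planted edge joining one of the ${m\choose 2}$ pairs of star centers; since MAP reconstruction coincides with {\shc}, an error is forced whenever several candidate center--center edges receive no ``witness'' sample (a sample occupying both endpoints). They bound the per-sample witness probability by $q^2$ with $q=(1+\lam)^{-(d-2)}$ (each center occupied with probability at most $q$, even conditionally on the other centers) and use Paley--Zygmund over a matching of $m/3$ disjoint candidate non-edges to force error probability at least $1/2$ unless $n\gtrsim q^{-2}\log m$ --- note their two-star witness event is a product of two occupation events, so their computation actually supports $\Omega\bigl((1+\lam)^{2(d-2)}\log p\bigr)$ for that family, stronger than the stated bound. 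You instead plant the edge inside one of $N=\Theta(p/d)$ vertex-disjoint gadgets with shared blockers (so the relevant event has probability $\lam^2/Z\le(1+\lam)^{-(d-2)}$) and run Fano with a per-sample information bound; this buys an estimator-agnostic, modular argument that needs no identification of the MAP rule and works for any family whose hypotheses differ by small within-gadget total variation, at the price of the mutual-information bookkeeping. The step you hedge on does go through: with $\pi=1/N$, $P_{\mathrm{pres}}=P_{\mathrm{abs}}(\cdot\mid A^c)$ and $P_{\mathrm{abs}}(A)=q$, one gets $\pi\,\chi^2(P_{\mathrm{pres}}\Vert M)\le\pi\bigl(q+q^2/(1-q)\bigr)$ and $(1-\pi)\,\chi^2(P_{\mathrm{abs}}\Vert M)\le\pi^2\bigl(q+q^2/(1-q)\bigr)$, so each gadget contributes $O(q/N)$, a sample carries $O(q)$ nats, and Fano gives $n=\Omega(q^{-1}\log N)$ as you claimed; your diagnosis of why pairwise KL fails (the $\mathrm{KL}(P_{\mathrm{abs}}\Vert P_{\mathrm{pres}})$ direction is infinite) is precisely the reason to route through the mixture. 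Your caveat that $d\le p^{1-\Omega(1)}$ is needed for $\log N=\Theta(\log p)$ is equally implicit in the paper's choice $m=p/(d-1)$, so it is not a gap.
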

\begin{proof}
We give a set of graphs $\GG$ on $p$ nodes with maximum degree at most $d$ so that given samples generated from a graph selected uniformly at random from $\GG$, the (optimal) maximum a posteriori (MAP) rule requires the number of samples stated in the theorem. It is possible to prove the theorem using Fano's inequality, but since we know the ML rule is equivalent to algorithm {\shc}, we can give a direct proof.

We define a set of graphs $\GG_m$ as follows.
Let $G_0$ consist of $m$ stars of degree $d-1$, i.e. for each $1\leq v \leq m$ add $d-1$ nodes $u_{v,1},\dots,u_{v,d-1}$ with edges $\{v,u_{v,i}\}$. There are $p=m\cdot (d-1)$ nodes in total. Now we let $\GG_m$ be the set of ${m\choose 2}$ graphs $G_{ij}$ obtained by adding the edge $\{i,j\}$ between a pair of star centers $i$ and $j$. The graph $G$ is selected uniformly at random from $\GG_m$ and samples are generated from the model~\eqref{e:HC}.



The samples $\sig^{(1)},\dots, \sig^{(n)}$ do not rule out edge $e=\{i,j\}$ if there is no $\sig^{(k)}$ with $\sig^{(k)}_i=\sig^{(k)}_j=1 $. Suppose
that none of edges $e_1,e_2,\dots, e_r$ have been ruled out. In this case the observations have the same likelihood under $G_{e_t}$ for each $1\leq t\leq r$, and it follows that the probability of error is at least $1-1/(t-1)$ since the prior on the models is uniform. 


From now onward we assume without loss of generality (by symmetry of the construction) that samples are generated from the model on $G_{ab}$.
Call $\sig^{(k)}$ a \emph{witness} for non-edge $\{i,j\}\neq \{a,b\}$ if $\sig^{(k)}_i=\sig^{(k)}_j=1$.
We proceed by upper bounding the probability of observing a witness for each of the ${m\choose 2}-1$ missing edges.
Each star center $i$ is included in a particular random independent set $\sig^{(k)}$ with probability at most $$
\frac{\lam}{\lam +\sum_{r=0}^{d-1} {d-1\choose r}\lam^{r}}\leq
\frac{1}{\lam\inv (1+\lam)^{d-1}}\leq \frac{1}{ (1+\lam)^{d-2}}:=q\,,
$$ even conditional on any assignment to other star centers. It follows that $\sig^{(k)}$ is a witness for non-edge $\{i,j\}$ with probability at most $q^2$. 

Take an arbitrary cardinality $m/3$ matching $\MM$ of non-edges (i.e. no two of the non-edges share an endpoint) with each edge also disjoint from $a$ and $b$ (recall that we are focusing on graph $G_{ab}$). For each $e\in \MM$ let $X_e$ be the indicator variable for the event that in $n$ samples, non-edge $e$ has no witness. Note that the variables $X_e$ are mutually independent. 
If we define $Z = \sum_{e\in \MM} X_e$, then we have $\E Z\geq (m/3) (1-q^2)^n$, and moreover, $\E Z^2 \leq \E Z +(\E Z)^2$. 

By the Paley-Zygmund inequality, $$\P\Big(Z\geq\frac{\E Z}{10}\Big) \geq \frac{4(\E Z)^2}{5\E Z^2} \geq  \frac4{5(1+\E Z/(\E Z)^2)}\,.$$
If $\E Z\geq 40$, then $\P(Z\geq 3)\geq 2/3$.
If $Z\geq 4$, then by the
above discussion, the probability of error is at least $3/4$, hence $\E Z\geq 40$ implies $\P(\text{error})\geq \frac23\cdot \frac34 = 1/2$. 
Hence if the probability of successful reconstruction is above $1/2$, then $\E Z<40$, which requires
$$
n \geq (1+o(1))\frac{\log m/3}{-\log(1-q^2)}=\Omega\left((1+\lam)^{d-2}\log p \right)\,,
$$
where we used the fact that $-\log(1-q^2) = q^2 + o(q^4)$ and $q\inv = (1+\lam)^{d-2}$.
\end{proof}


\begin{lemma} \label{l:edgeLowerBound} Suppose edge $e=(i,j)\notin G$, and let $I$ be an independent set chosen according to the Gibbs distribution~\eqref{e:HC}. Then
$\P(\{i,j\}\subseteq I) \geq (2^{2d+1}\max\{1,\lam^{2d}\})\inv\triangleq \gamma\,.
$
\end{lemma}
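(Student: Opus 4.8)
The plan is to prove the lower bound on $\P(\{i,j\}\subseteq I)$ by a purely local argument: conditional on the occupation pattern far from $i$ and $j$, there is always one specific, reasonably likely local configuration with both $i$ and $j$ occupied. Let $N=N(i)\cup N(j)$ denote the union of the open neighborhoods of $i$ and $j$ in $G$, and set $B=\{i,j\}\cup N$. Because $e=\{i,j\}$ is a non-edge we have $i,j\notin N$, so $N(i)$ and $N(j)$ have size at most $d$ and $B$ is the disjoint union of $\{i,j\}$ and $N$. The decisive observation is that the configuration $\tau^\star$ on $B$ defined by $\tau^\star_i=\tau^\star_j=1$ and $\tau^\star_v=0$ for every $v\in N$ is an independent set of the induced subgraph $G[B]$ and is simultaneously compatible with \emph{every} independent-set configuration on $V\setminus B$.

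First I would condition on the restriction $\sig_{V\setminus B}$ of a Gibbs sample $\sig$. Conditionally, $\sig_B$ follows a hard-core model on $G[B]$ in which, additionally, any vertex of $N$ having a neighbor occupied under $\sig_{V\setminus B}$ is frozen to $0$; the configuration $\tau^\star$ is feasible for this conditional model with unnormalized weight $\lam^2$. It then suffices to upper bound the conditional partition function $Z_B$. Splitting the sum over the four cases for $(\sig_i,\sig_j)$, and using that $\sig_i=1$ freezes $N(i)$ to $0$ (and likewise for $j$), one obtains the clean bound $Z_B\le (1+\lam)^{2d}+2\lam(1+\lam)^d+\lam^2=\bigl((1+\lam)^d+\lam\bigr)^2$. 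Dividing, $\P(\sig_i=\sig_j=1\mid \sig_{V\setminus B})\ge \lam^2/((1+\lam)^d+\lam)^2$, and since this holds for every boundary configuration, the same bound holds for $\P(\{i,j\}\subseteq I)$ after averaging over $\sig_{V\setminus B}$.

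The last step is to simplify $\lam^2/((1+\lam)^d+\lam)^2$ into the stated $\gam=(2^{2d+1}\max\{1,\lam^{2d}\})\inv$. For $\lam\ge 1$ one uses $1+\lam\le 2\lam$ together with $(1+\lam)^d\ge\lam$ to bound the denominator by $4(1+\lam)^{2d}\le 4\cdot 2^{2d}\lam^{2d}$, which yields a bound of the claimed order; for $\lam\le 1$ the denominator is at most $(2^d+\lam)^2\le 2^{2d+2}$ while $\max\{1,\lam^{2d}\}=1$. I expect this elementary numerology — matching the precise constant $2^{2d+1}$ and handling the small-$\lam$ regime (the regime of genuine interest being $\lam\ge 1$, as used in Theorem~\ref{t:HCsample}) — to be the only fiddly part; the conceptual core, namely the feasibility of $\tau^\star$, the conditioning step, and the four-case bound on $Z_B$, is routine.
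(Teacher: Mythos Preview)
Your argument is correct and takes a route closely related to, but organized differently from, the paper's. The paper works globally: it splits $Z = Z_{\varnothing\varnothing}+Z_{i\varnothing}+Z_{\varnothing j}+Z_{ij}$ and compares the pieces via many-to-one maps on independent sets (e.g.\ $f:S_{\varnothing j}\to S_{ij}$, $I'\mapsto \{i\}\cup I'\setminus N(i)$), obtaining $Z_{\varnothing j}\le 2^d\max\{1,\lambda^{d-1}\}Z_{ij}$ and analogues, whence $Z\le 2^{2d+1}\max\{1,\lambda^{2d}\}\,Z_{ij}$. Your conditioning on $\sigma_{V\setminus B}$ together with the feasibility of $\tau^\star$ encodes exactly the same local observation---one can always insert $i$ and $j$ after clearing their neighborhoods---but packages it as a bound on a \emph{local} partition function. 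You lose a little by comparing only the single configuration $\tau^\star$ rather than all of $Z_{ij}$, but the resulting estimates agree up to constants, and the perfect square $((1+\lambda)^d+\lambda)^2$ falls out more transparently in your version.

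Your hesitation about small $\lambda$ is well placed and is not mere numerology: the stated $\gamma=(2^{2d+1}\max\{1,\lambda^{2d}\})^{-1}$ cannot hold as $\lambda\to 0$, since $\P(\{i,j\}\subseteq I)\to 0$ while $\gamma$ stays at $2^{-(2d+1)}$. The paper's own proof has the matching slip---when $\lambda<1$ and $I'\cap N(i)=\varnothing$, the weight ratio $\lambda^{|I'\cap N(i)|-1}$ equals $\lambda^{-1}$, not at most $1$, so the factor should be $\max\{\lambda^{-1},\lambda^{d-1}\}$ rather than $\max\{1,\lambda^{d-1}\}$. Your bound $\lambda^2/((1+\lambda)^d+\lambda)^2$ is valid for all $\lambda>0$ and matches $\gamma$ up to a harmless constant for $\lambda\ge 1$, which is the regime that actually drives Theorem~\ref{t:HCsample}.
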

\begin{proof}
We can decompose the partition function as
\Align{
Z = \sum_{I}\lam^{|I|} &= \sum_{I\in S_{\varnothing,\varnothing}}\lam^{|I|} + \sum_{I\in S_{\varnothing,j}}\lam^{|I|} +\sum_{I\in \nonumber S_{i,\varnothing}}\lam^{|I|}+\sum_{I\in S_{i,j}}\lam^{|I|} \\
:&= \Znn+\Znj+\Zin+\Zij \,,\label{e:Zdecomp}
}
where $S_{ij}=\{I: i,j\in I\}$, $S_{i,\varnothing}=\{I: i\in I,j\notin I\}$, etc.
Our goal is to lower bound $Z_{i,j}$, since
\begin{equation}\label{e:inclusionRatio}
\P(\{i,j\}\subseteq I) = \frac{\sum_{I:\{i,j\}\subseteq I} \lam^{|I|}}{\sum_I \lam^{|I|}}= \frac{\Zij}{Z}\,.
\end{equation}

We begin by observing that
\begin{equation}\label{e:setIneq}
|S_{i,j}|\cdot 2^d\geq |S_{\varnothing,j}|\,,
\end{equation}
because for each independent set $I$ with $i\in I$, there are at most $2^d$ distinct independent sets $I'$ with $i\notin I'$ with some subset of (at most $d$) neighbors of $i$ included. One way of observing this is defining the map $f:S_{\varnothing,j}\to S_{i,j}$ by $I\mapsto \{i\}\cup I\setminus \NN(i)$. The map $f$ takes at most $2^d$ sets $I'\in S_{\varnothing,j}$ to each $I\in S_{i,j}$, which implies \eqref{e:setIneq}.

Now, each such set $I'$ mapping to $I$ has weight at most a factor $\max\{1,\lam^{d-1}\}$ larger than $I$, so
\Align{\label{e:Zbound1}
2^d \max\{1,\lam^{d-1}\}\Zij \geq  \Znj\,.
}
Similar reasoning gives
\Align{\label{e:Zbound2}
2^d \max\{1,\lam^{d-1}\}\Zij \geq  \Zin,\quad \text{and}\quad 2^{2d} \max\{1,\lam^{2d-2}\}\Zij \geq \Znn\,.
}
Using these estimates, we obtain
$$
Z\leq  \Zij\big(1+4\cdot2 ^{d-1}\max\{1,\lam^{d-1}\}+4\cdot2^{2d-2} \max\{1,\lam^{2d-2}\}\big)\leq Z_{i,j}\cdot  2^{2d+1}\max\{1,\lam^{2d}\}\,,
$$
and plugging into \eqref{e:inclusionRatio}
proves the lemma.
\end{proof}


\section{Learning anti-ferromagnetic Ising models}
\label{sec:antiFerro}
In this section we consider the anti-ferromagnetic Ising model on a graph $G=(V,E)$. We parametrize the model in such a way that each configuration has probability
\begin{equation}
\P(\sig) = \frac{1}{Z} \exp\big\{ H(\sig)\big\}\,,\quad \sig\in \{0,1\}^p\,,
\end{equation}
where
\begin{equation}\label{e:softRepellingModel}
H(\sig) = - \beta \sum_{(i,j)\in E} \sig_i \sig_j+ \sum_{i\in V}h_i\sig_i\,.
\end{equation}
Here $\beta>0$ and $\{h_i\}_{i\in V}$ are real-valued parameters, and we assume that $|h_i|\leq h$ for all $i$. Working with configurations in $\{0,1\}^p$ rather than the more typical $\{-1,+1\}^p$ amounts to a reparametrization (which is without loss of generality as shown for example in Appendix~1 of~\cite{sinclair2014approximation}).  
Setting $h_i = h = \ln \lambda$ for all $i$,
we recover the hard-core model with fugacity $\lambda $ in the limit $\beta\to \infty$, so we think of~\eqref{e:softRepellingModel} as a ``soft" independent set model. 

\subsection{Strongly antiferromagnetic models}

\newcommand{\Ph}{{\widehat \P}}

We start by considering the situation in which the repelling strength $\beta$ is sufficiently large that we can modify the approach used for the hard-core model. 

Define the empirical conditional probability
$$
\Ph (\sig_a = 1|\sig_b = 1) := \frac{\Ph (\sig_a = 1,\sig_b = 1)}{\Ph (\sig_b = 1)}\,,
$$
where for any set $S\subset V$ and $x_S\in \{0,1\}^{|S|}$,
$$
\Ph (\sig_V=x_V) = \frac1n \sum_{k=1}^n \ind_{\{\sig^{(k)}_V=x_V\}}\,.
$$
The following lemma shows that we can obtain good estimates for $\P (\sig_a = 1|\sig_b = 1)$.

\begin{lemma}\label{l:CondProbEst}
	Suppose that $\P(\sig_b=1)\geq q$ for all $b\in V$. 
If the number of samples is $n\geq (2/q^2\eps^2) \log\big(8p^2/\zeta\big) $, then with probability at least $1-\zeta$ we have for all $a,b\in V$ $$|\P(\sig_a=1|\sig_b=1)-\Ph(\sig_a=1|\sig_b=1)|\leq \eps\,.$$ 
\end{lemma}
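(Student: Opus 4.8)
The statement is a uniform-concentration estimate, and the plan is to prove it in two steps: (i) control the empirical \emph{joint} probabilities $\Ph(\sig_a=1,\sig_b=1)$ and the empirical \emph{marginals} $\Ph(\sig_b=1)$ uniformly over all pairs, by Hoeffding's inequality and a union bound; then (ii) deduce the bound on the conditional probability by a perturbation estimate for ratios, where the hypothesis $\P(\sig_b=1)\ge q$ is exactly what prevents the empirical denominators from being degenerate.

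For step (i), fix $S\subseteq V$ and $x_S\in\{0,1\}^{|S|}$. Since the samples are i.i.d., $\Ph(\sig_S=x_S)=\frac1n\sum_{k=1}^n\ind_{\{\sig^{(k)}_S=x_S\}}$ is an average of $n$ i.i.d.\ $\{0,1\}$-valued random variables with mean $\P(\sig_S=x_S)$, so Hoeffding's inequality gives $\P\bigl(|\Ph(\sig_S=x_S)-\P(\sig_S=x_S)|>t\bigr)\le 2e^{-2nt^2}$ for every $t>0$. I would apply this with $t$ a fixed small multiple of $q\eps$ to each of the $\le p$ events $\{\sig_b=1\}$, $b\in V$, and to each of the $\le\binom{p}{2}$ events $\{\sig_a=1,\sig_b=1\}$, $a\neq b$; summing the failure probabilities over these $\le p^2$ events and using the assumed lower bound on $n$ makes the total at most $\zeta$. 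Thus with probability $\ge 1-\zeta$ one has, simultaneously for all $a,b\in V$,
$$
\bigl|\Ph(\sig_a=1,\sig_b=1)-\P(\sig_a=1,\sig_b=1)\bigr|\le t\qquad\text{and}\qquad\bigl|\Ph(\sig_b=1)-\P(\sig_b=1)\bigr|\le t .
$$

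For step (ii), condition on the above event and abbreviate $P_{ab}=\P(\sig_a=1,\sig_b=1)$, $P_b=\P(\sig_b=1)$, with $\Ph_{ab},\Ph_b$ their empirical counterparts. The hypothesis $P_b\ge q$ together with $|\Ph_b-P_b|\le t$ gives $\Ph_b\ge q-t$, which stays bounded away from $0$. Writing
$$
\frac{\Ph_{ab}}{\Ph_b}-\frac{P_{ab}}{P_b}=\frac{P_b(\Ph_{ab}-P_{ab})+P_{ab}(P_b-\Ph_b)}{\Ph_b\,P_b}
$$
and using $|\Ph_{ab}-P_{ab}|\le t$, $|\Ph_b-P_b|\le t$ and $0\le P_{ab}\le P_b$, one gets
$$
\Bigl|\frac{\Ph_{ab}}{\Ph_b}-\frac{P_{ab}}{P_b}\Bigr|\;\le\;\frac{2t}{\Ph_b}\;\le\;\frac{2t}{q-t}\,,
$$
and choosing $t$ to be the appropriate constant multiple of $q\eps$ makes the right-hand side $\le\eps$. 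Since this holds for every pair $a,b$ on the good event, the lemma follows.

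I do not anticipate a genuine obstacle: this is a standard empirical-concentration argument. The two points to be careful about are (a) keeping the empirical denominator $\Ph_b$ away from $0$, which is why one must take the additive accuracy on the empirical probabilities of order $q\eps$ rather than $\eps$, thereby producing the $q^{-2}$ and $\eps^{-2}$ factors in the sample complexity; and (b) bookkeeping the universal constants so that the final accuracy is exactly $\eps$ with the stated $n$ — the slack between $8p^2$ and the $\sim p^2$ events appearing in the union bound is there precisely to absorb these constants.
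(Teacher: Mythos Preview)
Your proposal is correct and follows essentially the same approach as the paper: Hoeffding (the paper calls it Azuma) plus a union bound over all singleton and pair events, then a ratio-perturbation estimate using $\P(\sig_b=1)\ge q$, with the choice $t=\gamma=q\eps/2$. The only cosmetic difference is in step~(ii): the paper splits $\bigl|\frac{P_{ab}}{P_b}-\frac{\Ph_{ab}}{\Ph_b}\bigr|$ by the triangle inequality through $\frac{\Ph_{ab}}{P_b}$ and uses $\Ph_{ab}\le \Ph_b$ to cancel $\Ph_b$ in the second term, obtaining the clean bound $2\gamma/q=\eps$ with the \emph{true} denominator $P_b$; your single-fraction identity leaves $\Ph_b$ in the denominator and gives $2t/(q-t)$, which costs you a small constant you would have to track (the ``$8$'' in $8p^2$ does not absorb multiplicative constants in front of $1/(q^2\eps^2)$, only additive ones inside the $\log$).
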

The proof is given in the Supplementary Material. 

The structure estimation algorithm {\thresh}, described next, determines whether each edge $\{a,b\}$ is present based on comparing $\Ph $ to a threshold. 

\begin{algorithm}[H]
\caption{\thresh}  
\texttt{Input: $\beta$, $h$, $d$, and $n$ samples $\sig^{(1)},\dots,\sig^{(n)}\in \{0,1\}^p$.  Output: edge set $\widehat E$. \\
1: \texttt{Let} $\delta = (1+2^d e^{h(d-1)})^{-2}$ and $\widehat E=\varnothing$
\\
2: For each possible edge $\{a,b\}\in {V\choose 2}$:
\\
3:\quad If $\widehat P(\sig_a=1|\sig_b=1)\leq (1+e^{\beta-h})\inv+\delta$, then add edge $(a,b)$ to $\widehat E$
\\
4: Output $\widehat E$}
\end{algorithm}

The performance of algorithm {\thresh } is stated next in Proposition~\ref{p:strongRepelling}. The proof is similar to that of Theorem~\ref{t:HCsample}, replacing Lemma~\ref{l:edgeLowerBound} by Lemma~\ref{l:conditionalProbabilities} below. Theorem~\ref{t:Weak}, given in the next subsection, subsumes Proposition~\ref{p:strongRepelling}, so we prove only the stronger Theorem~\ref{t:Weak}. 

\begin{proposition}\label{p:strongRepelling}
 Consider the antiferromagnetic Ising model~\eqref{e:softRepellingModel} on a graph $G=(V,E)$ on $p$ nodes and with maximum degree $d$.
If 
$$\beta\geq (d+2)(h +\ln 2) \,,$$ 
then 
algorithm {\thresh } has sample complexity $$
n = O\left( 2^{2d}e^{2h(d+1)} \log p \right)\,,
$$
i.e. this many samples are sufficient to reconstruct the graph with probability $1-o(1)$. The computational complexity of {\thresh } is
$$
O(np^2) = O\left( 2^{2d}e^{2h(d+1)} p^2 \log p \right)\,.
$$
\end{proposition}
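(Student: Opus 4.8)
The plan is to follow the template of the proof of Theorem~\ref{t:HCsample}, with the deterministic ruling-out step replaced by a thresholding step. Algorithm \thresh{} misclassifies a pair $\{a,b\}$ only if the empirical conditional probability $\Ph(\sigma_a=1\mid\sigma_b=1)$ lands on the wrong side of the threshold $(1+e^{\beta-h})^{-1}+\delta$; by Lemma~\ref{l:CondProbEst}, applied with accuracy $\epsilon\asymp\delta$, failure probability $\zeta=1/p$, and a uniform lower bound $q$ on the marginals $\P(\sigma_b=1)$, all ${p\choose 2}$ empirical conditional probabilities are simultaneously within $\epsilon$ of their population values with probability $1-o(1)$. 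So it suffices to establish a \emph{population separation} (the analogue of Lemma~\ref{l:edgeLowerBound}, i.e. Lemma~\ref{l:conditionalProbabilities}): for every edge $\P(\sigma_a=1\mid\sigma_b=1)\le(1+e^{\beta-h})^{-1}$, while for every non-edge $\P(\sigma_a=1\mid\sigma_b=1)\ge(1+e^h)^{-(d+1)}$, together with the marginal bound $\P(\sigma_b=1)\ge(1+e^h)^{-(d+1)}=:q$. A union bound over pairs then finishes the argument, once the hypothesis on $\beta$ is used to check that the threshold genuinely lies between the two population regimes with margin at least $\epsilon$ on each side.

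For the edge bound I would condition on the spins of all of $V\setminus\{a\}$: the conditional odds that $\sigma_a=1$ equal $\exp\{h_a-\beta\sum_{j\in\NN(a)}\sigma_j\}$, and since $b\in\NN(a)$ and we condition on $\sigma_b=1$ this is at most $e^{h_a-\beta}\le e^{h-\beta}$; averaging over the conditioning gives $\P(\sigma_a=1\mid\sigma_b=1)\le e^{h-\beta}/(1+e^{h-\beta})=(1+e^{\beta-h})^{-1}$. For the non-edge bound I would lower bound $\P(\sigma_a=1\mid\sigma_b=1)\ge\P(\sigma_a=1,\ \sigma_{\NN(a)}=\mathbf{0}\mid\sigma_b=1)$ and expand this into a product of $|\NN(a)|+1\le d+1$ conditional factors, revealing the spins of $\NN(a)$ one at a time and $\sigma_a$ last. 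Each factor $\P(\sigma_v=0\mid\text{earlier spins},\sigma_b=1)$ is, by taking conditional expectation over the remaining spins and using $\P(\sigma_v=0\mid\text{everything})\ge(1+e^{h_v})^{-1}\ge(1+e^h)^{-1}$, at least $(1+e^h)^{-1}$; and the last factor $\P(\sigma_a=1\mid\sigma_{\NN(a)}=\mathbf{0},\sigma_b=1)$ is at least $(1+e^h)^{-1}$ because, crucially, $b\notin\NN(a)$, so conditioning on $\sigma_b=1$ imposes no penalty on $a$ and its active-neighbor sum is zero. This yields $\P(\sigma_a=1\mid\sigma_b=1)\ge(1+e^h)^{-(d+1)}$; the same peeling estimate (with $a=b$) gives $\P(\sigma_b=1)\ge(1+e^h)^{-(d+1)}=q$. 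Alternatively one can mimic the partition-function-ratio computation of Lemma~\ref{l:edgeLowerBound} in the ``soft'' model.

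It remains to plug in the hypothesis. If $\beta\ge(d+2)(h+\ln 2)$ then $e^{\beta-h}\ge 2^{d+2}e^{(d+1)h}$, so $(1+e^{\beta-h})^{-1}\le e^{h-\beta}\le 2^{-(d+2)}e^{-(d+1)h}$, which is at most half of the non-edge lower bound $(1+e^h)^{-(d+1)}\ge(2e^h)^{-(d+1)}=2^{-(d+1)}e^{-(d+1)h}$. Hence the two population regimes are separated by at least $2^{-(d+2)}e^{-(d+1)h}$, and a direct comparison shows that $\delta=(1+2^de^{h(d-1)})^{-2}$ is smaller than this gap (for $d\ge 3$; $d\le 2$ is trivial), so the threshold $(1+e^{\beta-h})^{-1}+\delta$ lies strictly between the edge values and the non-edge values with at least $\delta$ of slack on either side. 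Taking $\epsilon\asymp\delta$ in Lemma~\ref{l:CondProbEst}, with probability $1-o(1)$ every empirical conditional probability is within $\epsilon$ of its mean, and on that event \thresh{} outputs exactly $E$. Substituting the marginal bound for $q$ and $\epsilon\asymp\delta$ into the sample-complexity bound of Lemma~\ref{l:CondProbEst}, and noting that \thresh{} does $O(1)$ arithmetic per (sample, pair) so its running time is $O(np^2)$, gives the asserted complexities.

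The conceptual heart, as in the hard-core case, is the population separation lemma, and within it the non-edge lower bound: conditioning on $\sigma_b=1$ perturbs the effective fields of $b$'s neighbors (some of which may also be neighbors of $a$), so the bound cannot be read off from a single conditional law and instead needs the telescoping/martingale argument above, done carefully enough that the resulting margin provably dominates $2\delta$ uniformly in $d$ and $h$. The remaining obstacle is purely quantitative: driving the sample complexity down to the stated $O(2^{2d}e^{2h(d+1)}\log p)$ requires slightly sharper bookkeeping than a black-box application of Lemma~\ref{l:CondProbEst} with $\epsilon=\delta$ (for instance exploiting that for true edges one only needs to certify $\Ph(\sigma_a=1\mid\sigma_b=1)$ is \emph{below} the threshold rather than to estimate a tiny probability accurately), which is exactly the refinement carried out in the proof of Theorem~\ref{t:Weak}; the union bound and the Hoeffding concentration are otherwise routine.
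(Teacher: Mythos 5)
Your proposal is correct and follows the same skeleton the paper uses: concentration of the empirical pairwise conditional probabilities via Lemma~\ref{l:CondProbEst}, a population-level separation between edges and non-edges (the paper's Lemma~\ref{l:conditionalProbabilities}), and a union bound over pairs — exactly the plan the paper describes ("like Theorem~\ref{t:HCsample} with Lemma~\ref{l:edgeLowerBound} replaced by Lemma~\ref{l:conditionalProbabilities}") and carries out only in the stronger form of Theorem~\ref{t:Weak} (your statement is the case $\al=0$ there). The one genuine difference is how you prove the separation: the paper bounds ratios of restricted partition functions via injections $S_{\varnothing b}\to S_{ab}$ and $S_{ab}\to S_{\varnothing b}$, yielding $\P(\sig_a=1|\sig_b=1)\geq (1+2^{\deg(a)}e^{h(\deg(a)+1)})^{-1}$ for non-edges and $\leq (1+e^{\beta-h})^{-1}$ for edges, whereas you argue pointwise on conditional odds (bounding $\P(\sig_a=1|\sig_{V\setminus\{a\}})$ and peeling off the neighbors of $a$ one at a time by monotonicity), getting $(1+e^h)^{-(d+1)}$ for non-edges and the same edge bound. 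The two estimates agree up to a factor of~$2$, both suffice for the threshold test, and your peeling argument is essentially the same monotonicity device the paper itself uses in Lemma~\ref{l:nodeRemovalLemma}, so it is an equally valid and arguably more elementary route. Your closing caveat is also accurate and worth keeping: a black-box application of Lemma~\ref{l:CondProbEst} with $\eps\asymp\de$ and $q\asymp 2^{-d}e^{-h(d+1)}$ gives $n=O(2^{4d}e^{4h(d+1)}\log p)$ — which is exactly what Theorem~\ref{t:Weak} at $\al=0$ delivers — so the sharper constant $2^{2d}e^{2h(d+1)}$ stated in the proposition requires relative-error (multiplicative Chernoff) bookkeeping on the samples with $\sig_b=1$; this looseness is in the paper itself, not a defect of your argument.
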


When the interaction parameter $\beta\geq (d+2)(h +\ln 2)$ it is possible to identify edges using pairwise statistics. The next lemma shows the necessary separation.
\begin{lemma} \label{l:conditionalProbabilities}
We have the following estimates: 
\begin{enumerate}
\item[(i)] If $(a,b)\notin E(G)$, then
$
\P(\sigma_a=1|\sigma_b=1) \geq \frac{1}{1+2^{\deg(a)}e^{h(\deg(a)+1)}}\,.
$
\item[(ii)] Conversely, if $(a,b)\in E(G)$, then 
$
\P(\sigma_a=1|\sigma_b=1) \leq \frac{1}{1+e^{\beta - h}}\,.
$
\item[(iii)] For any $b\in V$, 
$
\P(\sig_b = 1)\geq \frac{1}{1+2^{\deg (b)}e^{h(\deg(b)+1)}}\,.
$
\end{enumerate}
\end{lemma}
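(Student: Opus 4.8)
The three estimates come in two flavors: (ii) is a one–site conditional computation, while (i) and (iii) follow from a weight–comparison map in the spirit of Lemma~\ref{l:edgeLowerBound}.

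For (ii), the plan is to condition on all spins other than $\sig_a$. Writing $\NN(a)$ for the neighborhood of $a$, the local conditional law of the model~\eqref{e:softRepellingModel} is $\P(\sig_a=1\mid \sig_{V\setminus a}) = \frac{\exp\{h_a-\beta\sum_{j\in \NN(a)}\sig_j\}}{1+\exp\{h_a-\beta\sum_{j\in \NN(a)}\sig_j\}}$, which is increasing in its exponent. When $(a,b)\in E(G)$ and we further condition on $\sig_b=1$, we have $\sum_{j\in \NN(a)}\sig_j\geq 1$, so the exponent is at most $h_a-\beta\leq h-\beta$ and hence $\P(\sig_a=1\mid \sig_{V\setminus a})\leq (1+e^{\beta-h})^{-1}$ on this event. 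Averaging over $\sig_{V\setminus a}$ conditioned on $\sig_b=1$ gives (ii).

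For (i), fix a non-edge $(a,b)\notin E(G)$ and split the configurations with $\sig_b=1$ into $S_0=\{\sig:\sig_a=0,\sig_b=1\}$ and $S_1=\{\sig:\sig_a=1,\sig_b=1\}$, so that $\P(\sig_a=1\mid \sig_b=1)=\frac{w(S_1)}{w(S_0)+w(S_1)}$, where $w(S):=\sum_{\sig\in S}e^{H(\sig)}$. I would then define $f:S_0\to S_1$ by $\sig\mapsto\sig'$, where $\sig'_a=1$, $\sig'_j=0$ for every $j\in \NN(a)$, and $\sig'_i=\sig_i$ otherwise; since $(a,b)\notin E(G)$ we have $b\notin \NN(a)$, so $\sig'_b=1$ and indeed $\sig'\in S_1$. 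Two bookkeeping facts then finish the argument exactly as in Lemma~\ref{l:edgeLowerBound}. First, each $\sig'\in S_1$ has at most $2^{\deg(a)}$ preimages under $f$: a preimage is determined by its (arbitrary) values on $\NN(a)$, its value $0$ at $a$, and its agreement with $\sig'$ everywhere else. Second, $H(\sig')\geq H(\sig)-h(\deg(a)+1)$: turning $\sig_a$ on and the neighbors of $a$ off only ever removes weight $\beta\sig_i\sig_j$ from edges --- the edges at $a$ contribute $0$ both before and after, and edges touching $\NN(a)$ lose a nonnegative amount --- while the external-field term changes by $h_a-\sum_{j\in \NN(a):\sig_j=1}h_j$, which lies in $[-h(\deg(a)+1),\,h(\deg(a)+1)]$ because $|h_i|\le h$. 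Combining the two facts, $w(S_0)\leq 2^{\deg(a)}e^{h(\deg(a)+1)}w(S_1)$, which rearranges to (i).

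For (iii), I would run the same map with $b$ playing the role of $a$ and no conditioning: define $g:\{\sig:\sig_b=0\}\to\{\sig:\sig_b=1\}$ by $\sig\mapsto\sig'$ with $\sig'_b=1$, $\sig'_j=0$ for $j\in \NN(b)$, and $\sig'_i=\sig_i$ otherwise. The identical preimage count ($\leq 2^{\deg(b)}$) and weight bound ($H(\sig')\geq H(\sig)-h(\deg(b)+1)$) yield $\P(\sig_b=0)\leq 2^{\deg(b)}e^{h(\deg(b)+1)}\P(\sig_b=1)$, i.e. (iii). The only place that needs genuine care is controlling the weight distortion and the preimage multiplicity of $f$ and $g$ simultaneously; both are mild and mirror the hard-core computation in Lemma~\ref{l:edgeLowerBound}, so I do not anticipate a real obstacle.
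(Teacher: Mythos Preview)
Your argument is correct and follows essentially the same route as the paper: for (i) and (iii) you use exactly the paper's map $f:S_{\varnothing b}\to S_{ab}$ (zero out $\NN(a)$, turn on $a$) together with the same $2^{\deg(a)}$ preimage bound and $h(\deg(a)+1)$ weight-distortion bound. The only cosmetic difference is in (ii): the paper uses a one-to-one map $g:S_{ab}\to S_{\varnothing b}$ (turn off $a$) to get $Z_{\varnothing b}\geq e^{\beta-h}Z_{ab}$, whereas you compute the single-site conditional law and average via the tower property---these are equivalent, and your version is arguably cleaner since it makes transparent why only one factor of $\beta$ appears (namely $\sum_{j\in\NN(a)}\sigma_j\geq 1$ on the conditioning event).
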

\begin{proof}
We start by defining restricted partition function summations: Let 
\begin{align*}
S_{ab}&=\{\sigma\in \{0,1\}^p:\sigma_a=\sigma_b=1\}\,,
\\ 
S_{a\varnothing}
&=\{\sigma\in \{0,1\}^p:\sigma_a=1,\sigma_b=0\}\,,
\end{align*}
and analogously for $S_{\varnothing b}$ and $S_{\varnothing\varnothing}$.
We then define
$
Z_{ab} = \sum_{\sigma\in S_{ab}}\exp (H(\sig))
$ and 
again analogously for $Z_{a\varnothing},Z_{\varnothing b}, Z_{\varnothing \varnothing}$.

We first prove part (i) of the lemma, in which we assume that $(a,b)\notin E(G)$ and lower bound the probability
$$
\P(\sigma_a=1|\sigma_b=1) = \frac{Z_{ab}}{Z_{ab} +Z_{\varnothing b} }\,.
$$
To this end, consider the map $f:S_{\varnothing b} \to S_{ab}$ defined by taking a configuration $\sigma$, setting $\sigma_i=0$ for neighbors $i\in N(a)$, and then setting $\sigma_a=1$. 
Since the assumption  $(a,b)\notin E(G)$ implies that $\sigma_a=\sigma_b=1$ is a valid assignment to these variables, the definition of $f$ implies in particular that $(f(\sigma))_b=1$ if $\sigma_b=1$, so indeed $f(\sigma) \in S_{ab}$ for $\sigma\in S_{\varnothing b}$.

Now, at most $2^{\deg(a)}$ sets are mapped by $f$ to any one set (since the neighbors of $a$ can be in any configuration), and for any $\sigma\in S_{\varnothing b}$, $\exp(H(f(\sigma))\geq \exp(H(\sigma)-h(\deg(a)+1))$. This shows that 
$
2^{\deg(a)}\exp[h(\deg(a)+1)] Z_{ab}\geq Z_{\varnothing b}\,,
$ and proves part (i) of the lemma.
The proof of part (iii) is omitted as it is almost identical to part (i). 

We now turn to part (ii), assuming that $(a,b)\in E(G)$. Consider the map $g:S_{ab}\to S_{\varnothing b}$ taking $\sigma\in S_{ab}$ and setting $\sigma_a=0$ (removing node $a$ from the independent set). The map $g$ is one-to-one, and $H$ increases by $\beta$ due to resolving the conflict on edge $(a,b)$, but decreases by $h_a\leq h$ due to omitting node $a$: $\exp(H(g(\sigma))) \geq \exp(H(\sigma) + \beta - h)$.  This shows that
$
Z_{ab}\geq e^{-\beta+h}Z_{\varnothing b}\,,
$
and completes the proof.
\end{proof}

%
%

\subsection{Weakly antiferromagnetic models}
 In this section we focus on learning weakly repelling models and show a trade-off between computational complexity and strength of the repulsion. Recall that for strongly repelling models (with $\beta\geq d(h +\ln 2)$) our algorithm has run-time $O(p^2\log p)$, the same as for the hard-core model (infinite repulsion).
 
 

For a subset of nodes $U\subseteq V$, let $G\setminus U$ denote the graph obtained from $G$ by removing nodes in $U$ (as well as any edges incident to nodes in $U$). 

We can effectively remove nodes from the graph by conditioning: The family of models~\eqref{e:softRepellingModel}
has the property that conditioning on $\sigma_i=0$ amounts to removing node $i$ from the graph.
\begin{fact}[Self-reducibility] \label{fact:selfRed}
	Let $G=(V,E)$, and consider the model~\eqref{e:softRepellingModel}. Then for any subset of nodes $U\subseteq V$, the probability law $\P_G(\sigma\in \cdot \,| \sigma_U={\bzero})$ is equal to $\P_{G\setminus U}(\sigma_{V\setminus U}\in \cdot \, )$ with the same $\beta$ and the natural restriction of $(h_i)_{i\in V}$ to $(h_i)_{i\in V\setminus U}$. 
\end{fact}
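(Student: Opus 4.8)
The plan is to verify directly from the definition of the Gibbs measure that conditioning on $\sigma_U = \bzero$ produces exactly the Gibbs measure on $G \setminus U$. First I would write out, for any configuration $\tau \in \{0,1\}^{V\setminus U}$, the conditional probability
\[
\P_G\big(\sigma_{V\setminus U} = \tau \ \big|\ \sigma_U = \bzero\big) = \frac{\P_G(\sigma_{V\setminus U} = \tau,\ \sigma_U = \bzero)}{\P_G(\sigma_U = \bzero)} = \frac{\exp\{H(\tau \vee \bzero_U)\}}{\sum_{\tau'} \exp\{H(\tau' \vee \bzero_U)\}}\,,
\]
where $\tau \vee \bzero_U$ denotes the configuration on $V$ that agrees with $\tau$ off $U$ and is zero on $U$, and the normalizing $Z$ cancels between numerator and denominator.

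The key step is then to simplify $H(\tau \vee \bzero_U)$ using the explicit form \eqref{e:softRepellingModel}. In the interaction sum $-\beta \sum_{(i,j)\in E} \sigma_i\sigma_j$, every edge with at least one endpoint in $U$ contributes a factor containing some $\sigma_i = 0$, hence contributes nothing; the surviving terms are exactly the edges of $G\setminus U$. Likewise in the field sum $\sum_{i\in V} h_i \sigma_i$, the terms with $i\in U$ vanish, leaving $\sum_{i\in V\setminus U} h_i \sigma_i$. Therefore $H(\tau \vee \bzero_U) = H_{G\setminus U}(\tau)$, where $H_{G\setminus U}$ is the Hamiltonian of the form \eqref{e:softRepellingModel} on $G\setminus U$ with the same $\beta$ and the restricted fields $(h_i)_{i\in V\setminus U}$. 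Substituting this identity into the displayed ratio gives precisely $\P_{G\setminus U}(\sigma_{V\setminus U} = \tau)$, with the denominator being the partition function of the restricted model.

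There is essentially no obstacle here — the statement is a bookkeeping fact about which terms of the Hamiltonian survive when a block of variables is pinned to zero, and the ``independent set'' parametrization in $\{0,1\}^p$ is precisely what makes it clean (a pinned-to-zero variable kills every monomial it appears in). The only thing to be slightly careful about is that $\P_G(\sigma_U = \bzero) > 0$, so that the conditioning is well-defined; this is immediate since $H(\bzero) = 0$ and more generally every configuration has strictly positive probability under \eqref{e:softRepellingModel}. One could phrase the whole argument in one line by noting that the Gibbs weight factors as $\exp\{H(\sigma)\} = \exp\{H_{G\setminus U}(\sigma_{V\setminus U})\}$ on the event $\{\sigma_U = \bzero\}$, but I would spell out the edge- and field-sum restrictions explicitly for clarity.
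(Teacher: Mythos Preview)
Your argument is correct. The paper does not actually supply a proof of this statement --- it is recorded as a Fact and used immediately thereafter --- so there is nothing to compare against; your direct verification from the Hamiltonian, observing that pinning $\sigma_U=\bzero$ annihilates every edge term touching $U$ and every field term indexed by $U$, is exactly the intended one-line justification.
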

The following corollary is immediate from Lemma~\ref{l:conditionalProbabilities}.
\begin{corollary}\label{c:delCond}
We have the conditional probability estimates for deleting subsets of nodes:
\begin{enumerate}
\item[(i)] If $(a,b)\notin E(G)$, then for any subset of nodes $U\subset V\setminus\{a,b\}$,
$$
\P_{G\setminus U}(\sigma_a=1|\sigma_b=1) \geq \frac{1}{1+2^{\deg_{G\setminus U}(a)}e^{h(\deg_{G\setminus U}(a)+1)}} \,.
$$
\item[(ii)] Conversely, if $(a,b)\in E(G)$, then for any subset of nodes $U\subseteq V\setminus \{a,b\}$
$$
\P_{G\setminus U}(\sigma_a=1|\sigma_b=1) \leq \frac{1}{1+e^{\beta - h}}\,.
$$
\end{enumerate}
\end{corollary}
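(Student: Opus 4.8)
The plan is to apply Lemma~\ref{l:conditionalProbabilities} verbatim, but to the antiferromagnetic Ising model living on the reduced graph $G\setminus U$ rather than on $G$ itself. By Fact~\ref{fact:selfRed} the law $\P_{G\setminus U}$ (equivalently, $\P_G(\,\cdot\mid \sigma_U=\bzero)$) is again a model of the form~\eqref{e:softRepellingModel}, now on the vertex set $V\setminus U$, with the same repelling strength $\beta$ and with external fields obtained by restricting $(h_i)_{i\in V}$ to $V\setminus U$. Since $G\setminus U$ is a subgraph of $G$, its maximum degree is at most $d$ and the bound $|h_i|\le h$ continues to hold for every $i\in V\setminus U$, so the hypotheses of Lemma~\ref{l:conditionalProbabilities} are inherited.

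The one thing to observe is that deleting the vertices in $U$ does not change whether $\{a,b\}$ is an edge: because $U\subseteq V\setminus\{a,b\}$, both $a$ and $b$ survive in $G\setminus U$, and removing vertices other than $a,b$ can neither create nor destroy the edge $\{a,b\}$. Hence $(a,b)\in E(G\setminus U)$ if and only if $(a,b)\in E(G)$. Given this, part (i) is immediate from Lemma~\ref{l:conditionalProbabilities}(i) applied to $G\setminus U$: the quantity $\deg(a)$ appearing there becomes $\deg_{G\setminus U}(a)$, which is exactly the claimed bound. Likewise part (ii) follows from Lemma~\ref{l:conditionalProbabilities}(ii) applied to $G\setminus U$; note that the bound $(1+e^{\beta-h})^{-1}$ there does not involve the graph at all, so no bookkeeping of degrees is needed for this direction.

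There is essentially no obstacle: the real content lies in Lemma~\ref{l:conditionalProbabilities}, and the corollary is just the observation that that lemma is stable under deleting vertices disjoint from $\{a,b\}$, combined with the self-reducibility identity of Fact~\ref{fact:selfRed} (which is also what makes $\P_{G\setminus U}$ the conditional law $\P_G(\,\cdot\mid\sigma_U=\bzero)$, and hence estimable from samples of $\P_G$).
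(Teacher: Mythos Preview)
Your proposal is correct and matches the paper's own treatment: the paper states that the corollary is immediate from Lemma~\ref{l:conditionalProbabilities}, and your argument spells out exactly this, invoking Fact~\ref{fact:selfRed} to justify applying that lemma to $G\setminus U$ and noting that edge membership of $\{a,b\}$ is unchanged under deletion of $U\subseteq V\setminus\{a,b\}$.
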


The final ingredient is to show that we can condition by restricting attention to a subset of the observed data, $\sig^{(1)},\dots,\sig^{(n)}$, without throwing away too many samples. 

\begin{lemma}\label{l:nodeRemovalLemma}
Let $U\subseteq V$ be a subset of nodes and denote the subset of samples with variables $\sigma_U$ equal to zero by
$
A_{U} = \{\sig^{(k)}: \sig^{(k)}_u = 0 \text{ for all }u \in U\}
$.
Then with probability at least $1-\exp(-n/8(1+e^{h})^{2|U|})$ the number $|A_{U}|$ of such samples is at least $\frac{n}2\cdot (1+e^h)^{-|U|}$.
\end{lemma}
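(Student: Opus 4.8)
The plan is to separate the statement into a deterministic fact about the model — namely a lower bound $\P(\sigma_U=\mathbf 0)\ge (1+e^h)^{-|U|}$ — and a routine concentration estimate for the binomial count $|A_U|$.

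For the deterministic part, write $U=\{u_1,\dots,u_m\}$ with $m=|U|$ and apply the chain rule,
$$
\P(\sigma_U=\mathbf 0)=\prod_{i=1}^m \P\big(\sigma_{u_i}=0 \,\big|\, \sigma_{u_1}=\cdots=\sigma_{u_{i-1}}=0\big)\,.
$$
I would bound each factor below by $1/(1+e^h)$. The cleanest way is to note that for any node $u$ and any assignment $\sigma_{-u}$ to the remaining variables,
$$
\P(\sigma_u=1\mid \sigma_{-u})=\frac{e^{h_u-\beta\sum_{j\sim u}\sigma_j}}{1+e^{h_u-\beta\sum_{j\sim u}\sigma_j}}\le \frac{e^{h_u}}{1+e^{h_u}}\le \frac{e^h}{1+e^h}\,,
$$
using $\beta>0$ and $h_u\le h$, so $\P(\sigma_u=0\mid\sigma_{-u})\ge 1/(1+e^h)$ \emph{pointwise}, and this survives conditioning on any event measurable with respect to $\sigma_{-u}$, in particular on $\{\sigma_{u_1}=\cdots=\sigma_{u_{i-1}}=0\}$. (Alternatively one can invoke Fact~\ref{fact:selfRed} to pass to $G\setminus\{u_1,\dots,u_{i-1}\}$ and rerun the one-to-one map argument of Lemma~\ref{l:conditionalProbabilities}(ii) without the $\beta$-gain.) Multiplying the $m$ factors gives $\P(\sigma_U=\mathbf 0)\ge (1+e^h)^{-m}=:\mu$.

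For the probabilistic part, observe that $|A_U|=\sum_{k=1}^n \ind_{\{\sigma^{(k)}_U=\mathbf 0\}}$ is a sum of $n$ i.i.d.\ Bernoulli variables with success probability $p_0=\P(\sigma_U=\mathbf 0)\ge\mu$, hence $\E|A_U|=np_0\ge n\mu$. Since $\tfrac n2\mu\le \tfrac12\E|A_U|$, the event $\{|A_U|<\tfrac n2\mu\}$ is contained in $\{|A_U|<\tfrac12\E|A_U|\}$, and the multiplicative Chernoff lower-tail bound gives $\P(|A_U|<\tfrac12\E|A_U|)\le \exp(-\E|A_U|/8)\le \exp(-n\mu/8)=\exp\!\big(-n/(8(1+e^h)^{m})\big)$, which is at most $\exp\!\big(-n/(8(1+e^h)^{2m})\big)$ as claimed (Hoeffding applied with deviation $t=n\mu/2$ works equally well and lands directly at $\exp(-n\mu^2/2)$). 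There is no genuine obstacle here; the only point that needs a little care is that the per-coordinate lower bound must be uniform over the conditioning, which is exactly why I phrase it through the pointwise conditional law of a single spin (equivalently, through self-reducibility) rather than attempting to bound $\P(\sigma_U=\mathbf 0)$ in one shot. Everything after that is a black-box concentration inequality plus arithmetic.
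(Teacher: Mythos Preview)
Your proof is correct and follows the same two-step structure as the paper: a chain-rule decomposition of $\P(\sigma_U=\mathbf 0)$ with a per-factor lower bound of $(1+e^h)^{-1}$, followed by a binomial tail bound. The one noteworthy difference is in how the per-factor bound is obtained: the paper invokes an antiferromagnetic monotonicity property to reduce to $\P(\sigma_{u_i}=0\mid\sigma_{N(u_i)}=\mathbf 0)=(1+e^{h_{u_i}})^{-1}$, whereas your direct computation of the single-spin Gibbs conditional $\P(\sigma_u=1\mid\sigma_{-u})\le e^h/(1+e^h)$ is more elementary and self-contained. For the tail, the paper applies the Hoeffding form $\P(\mathrm{Bin}(n,q)-nq\le -nt)\le\exp(-nt^2/2)$ with $t=q/2$, landing exactly on the stated exponent $n/8(1+e^h)^{2|U|}$; your multiplicative Chernoff gives the stronger exponent $n/8(1+e^h)^{|U|}$, and you correctly observe it implies the claimed bound.
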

\begin{proof}
We start by computing the probability that a particular sample $\sig^{(k)}$ is in $A_U$, or equivalently that $\sig^{(k)}_U = \bzero$. 
Let $W\subseteq V$ be any subset of nodes, and denote by $x_W$ an assignment to the corresponding variables. 
Due to the antiferromagnetic nature of the interaction, the distribution~\eqref{e:softRepellingModel} satisfies the monotonicity property $$\P(\sig_a=1|\sig_W=x_W)\leq \P(\sig_a=1|\sig_W=x_W,\sig_b=0)$$
for any neighbor $b\in N(a)\setminus W$. This monotonicity together with Bayes' rule gives
\begin{align*}
\P(\sig_U=\bzero) = \prod_{i=1}^{|U|} \P(\sig_{u_i}=0|\sig_{u_1}=\dots=\sig_{u_{i-1}}=0)
&\geq \prod_{i=1}^{|U|}\P(\sig_{u_i}=0|\sig_{N(u_i)}=\bzero)
\\&= 
\prod_{i=1}^{|U|}[{1+e^{h_i}}]^{-1}\geq (1+e^h)^{-|U|}
\,.
\end{align*}
Denoting the last displayed quantity by $q$, we see that the number of samples obtained, $|A_U|$, stochastically dominates a $\text{Binom}(n,q)$ random variable. We apply Azuma's inequality, which states that
$$
\P(\mbox{Bin}(n,q) - nq \leq -nt )\leq \exp(-nt^2/2),
$$
with $t = q/2$ and this
 proves the lemma. 
\end{proof}

We now present the algorithm. Effectively, it reduces node degree by removing nodes (which can be done by conditioning on value zero as discussed above), and then applies the strong repelling algorithm to the residual graph. 

\begin{algorithm}[H]
\caption{\CondThresh}
\texttt{Input: $\beta$, $h$, $d$, and $n$ samples $\sig^{(1)},\dots,\sig^{(n)}\in \{0,1\}^p$.  Output: edge set $\widehat E$. \\
1: Let $\delta = (4+4\cdot 2^{d-\alpha}e^{h(d-\alpha+1)})\inv$, $\widehat E=\varnothing$, and $\al=\lceil d - \beta/(h+\ln 2)\rceil$
\\
2: For each $\{a,b\}\in {V\choose 2}$:
\\
3: \quad For each $U\subseteq V\setminus\{a,b\}$ of size $|U|\leq \al$
\\
4:\qquad \quad Compute $\widehat P_{G\setminus U}(\sig_a=1|\sig_b=1)$
\\
5: \quad If $\max_{U:|U|\leq \al}\widehat P_{G\setminus U}(\sig_a=1|\sig_b=1)\leq (1+e^{\beta-h})\inv +\delta$, then add $\{a,b\}$ to $\widehat E$
\\
6: Output $\widehat E$}
\end{algorithm}

\begin{theorem}\label{t:Weak}
Let $\al\leq d$ be a nonnegative integer, and consider the antiferromagnetic Ising model~\ref{e:softRepellingModel} with 
$$
\beta\geq (d+2-\alpha)(h +\ln 2)
$$ 
on a graph $G$. Algorithm {\CondThresh } reconstructs the graph with probability $1-o(1)$ as $p\to \infty$
using
$$
n = O\left( (1+e^h)^{2\al}(d+2) 2^{4d}e^{4h(d+1)} \log p \right)
$$
i.i.d. samples, with run-time
$$O\left( n p^{2+\al} \right)=\Ot_{\beta,h,d}(p^{2+\al})\,.$$
\end{theorem}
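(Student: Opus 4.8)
The plan is to follow the template of Theorem~\ref{t:HCsample}, composing three ingredients already in place: the conditional-probability separation of Corollary~\ref{c:delCond}, the sampling-guarantee of Lemma~\ref{l:nodeRemovalLemma}, and the empirical-estimation guarantee of Lemma~\ref{l:CondProbEst}. First I would observe that the choice $\al = \lceil d - \beta/(h+\ln 2)\rceil$ in the algorithm is exactly the smallest nonnegative integer for which the hypothesis $\beta \geq (d+2-\al)(h+\ln 2)$ holds, so without loss of generality we may assume $\al$ is this value and, in particular, that after deleting $\al$ nodes the residual degrees are at most $d-\al$, for which the strong-repelling threshold condition is met.

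Next I would set up the error decomposition. The algorithm errs on a pair $\{a,b\}$ only if the empirical quantity $\max_{U:|U|\le\al}\widehat P_{G\setminus U}(\sig_a=1|\sig_b=1)$ falls on the wrong side of the threshold $(1+e^{\beta-h})^{-1}+\delta$. I would condition on the good event $\EE$ of Lemma~\ref{l:CondProbEst} (accuracy $\eps$ for every conditional probability on every sub-sample), intersected with the good event $\FF$ of Lemma~\ref{l:nodeRemovalLemma} applied to every set $U$ of size at most $\al$ — a union bound over the $O(p^\al)$ such sets, together with the stated $\ge 1-\exp(-n/8(1+e^h)^{2|U|})$ bound, keeps the failure probability $o(1)$ provided $n = \Omega((1+e^h)^{2\al}\log p)$. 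On $\FF$, every sub-sample $A_U$ has at least $q := \tfrac n2(1+e^h)^{-\al}$ points, so applying Lemma~\ref{l:CondProbEst} to each $A_U$ with the lower bound $\P_{G\setminus U}(\sig_b=1)\ge (1+2^{d-\al}e^{h(d-\al+1)})^{-1}$ from Corollary~\ref{c:delCond}(i) (more precisely its part~(iii) analogue) gives a uniform additive error $\eps$; here $\eps$ must be chosen to be roughly $\delta$, i.e.\ a function of $d-\al$ and $h$, which is what forces the stated sample complexity when one solves $n \gtrsim (1/q^2\eps^2)\log(p^2/\zeta)$ for $n$ and substitutes $q$.

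Then comes the correctness argument, split into two cases. If $\{a,b\}\in E(G)$, then by Corollary~\ref{c:delCond}(ii) \emph{every} sub-sample has true conditional probability $\le (1+e^{\beta-h})^{-1}$, so the empirical max is $\le (1+e^{\beta-h})^{-1}+\eps \le (1+e^{\beta-h})^{-1}+\delta$ and the edge is (correctly) added. If $\{a,b\}\notin E(G)$, I need a single witness set $U^\star$ of size $\le\al$ that reduces $\deg(a)$ (or $\deg(b)$) to at most $d-\al$ while leaving $a,b$ untouched — take $U^\star$ to be any $\al$ of the neighbors of the higher-degree endpoint outside $\{a,b\}$; then Corollary~\ref{c:delCond}(i) gives $\P_{G\setminus U^\star}(\sig_a=1|\sig_b=1)\ge(1+2^{d-\al}e^{h(d-\al+1)})^{-1}$, which exceeds the threshold by a margin, so even after subtracting $\eps$ the empirical value at $U^\star$ — and hence the max — stays above the threshold and the non-edge is (correctly) rejected. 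The two thresholds genuinely separate precisely because $\beta \ge (d+2-\al)(h+\ln 2)$ makes $(1+e^{\beta-h})^{-1}$ smaller than $(1+2^{d-\al}e^{h(d-\al+1)})^{-1}$ by at least $2\delta$; this is the same computation as in Proposition~\ref{p:strongRepelling} with $d$ replaced by $d-\al$, so I would just cite it. The run-time bound $O(np^{2+\al})$ is immediate: $\binom p2$ pairs times $\binom{p}{\le\al}=O(p^\al)$ subsets times $O(n)$ work to compute each empirical conditional probability.

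The main obstacle is the bookkeeping of constants so that the three error budgets compose: $\eps$ has to be small enough (order $\delta$, a function of $d-\al$ and $h$) that the separation survives estimation error, $q$ degrades by the factor $(1+e^h)^{-\al}$ from conditioning, and the union bound over $O(p^\al)$ conditioning sets must be absorbed into $\log p$ — tracking these through Lemma~\ref{l:CondProbEst} and Lemma~\ref{l:nodeRemovalLemma} is what produces the stated $n = O((1+e^h)^{2\al}(d+2)2^{4d}e^{4h(d+1)}\log p)$, and the one genuinely delicate point is checking that the $\al$-dependence enters only through the benign $(1+e^h)^{2\al}$ prefactor and the $p^{2+\al}$ in the run-time, not through the exponent of $p$ in the sample complexity.
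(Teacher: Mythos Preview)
Your proposal is correct and follows essentially the same route as the paper: union bound Lemma~\ref{l:nodeRemovalLemma} over the $O(p^{\al})$ conditioning sets to guarantee enough sub-samples, invoke Lemma~\ref{l:CondProbEst} on each sub-sample, then use Corollary~\ref{c:delCond} with $U\subseteq \partial a\setminus\{b\}$ for non-edges and arbitrary $U$ for edges, and finally verify the $2\delta$ separation under the hypothesis on $\beta$. Two small slips to fix when you write it out: (i) the lower bound you quote for $\P_{G\setminus U}(\sigma_b=1)$ should use $d$, not $d-\al$, since the nodes in an arbitrary $U$ need not be neighbors of $b$---this is why the stated sample complexity carries $2^{4d}e^{4h(d+1)}$ rather than the $(d-\al)$ version; and (ii) you cannot cite Proposition~\ref{p:strongRepelling} for the separation computation, since the paper omits its proof in favor of Theorem~\ref{t:Weak} itself---you must carry out the short inequality chain $A-1\ge 4(B-1)\Rightarrow B^{-1}-A^{-1}>2\delta$ explicitly.
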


\begin{proof}
We first argue that all of the empirical conditional probabilities $\widehat P_{G\setminus U}(\sig_a=1|\sig_b=1)$ computed in Step~4 of algorithm {\CondThresh} are accurate up to tolerance $\delta$
when considering subsets $U$ of cardinality up to $\al$, i.e.,	
\begin{equation}\label{e:accCond}
|\widehat P_{G\setminus U}(\sig_a=1|\sig_b=1) -  P_{G\setminus U}(\sig_a=1|\sig_b=1)|\leq \delta\,.
\end{equation}
	
There are at most $\al{p\choose \al}\leq \al p^\al$ subsets $|U|$ of size at most $\al$. By Lemma~\ref{l:nodeRemovalLemma}, for each such $U$, with probability at least $1-\exp(-n/8(1+e^{h})^{2|U|})\geq 1-\exp(-n/8(1+e^{h})^{2\al}) $ the number $|A_{U}|$ of samples with $\sig_U=0$ is at least $\frac{n}2\cdot (1+e^h)^{-\al}$. It follows from the union bound that with probability at least 
$$
1-\al p^\al\exp(-n/8(1+e^{h})^{2\al})
$$
we have $|A_U|\geq \frac{n}2\cdot (1+e^h)^{-\al}$ for all $U$ with $|U|\leq \al$. By the assumed $n$ in the theorem statement, this holds with probability $1-o(1)$.  
Denote the \emph{effective sample size} by $n' = \frac{n}2\cdot (1+e^h)^{-\al}$. 
	
We now apply Lemma~\ref{l:CondProbEst} with $$\eps = 
\delta := \frac{1}{4(1+2^{d-\alpha}e^{h(d-\alpha+1)})}\,.
$$
This requires $n'\geq (2/q^2\eps^2) \log\big(8p^2/\zeta\big)$, where $q= \big(1+2^{\deg (b)}e^{h(\deg(b)+1)}\big)\inv$ and we can take $\zeta = 1/p$. The value of $n$ given in the theorem statement suffices in order that \eqref{e:accCond} holds for all $a,b\in V\setminus U$. 
	
We first argue that $E\subseteq\widehat E$, that is, all true edges are added to $\widehat E$. Consider an arbitrary edge $e=(a,b)\in E$. By Corollary~\ref{c:delCond} and \eqref{e:accCond},  
$$
\max_{U:|U|\leq \al}\widehat P_{G\setminus U}(\sig_a=1|\sig_b=1)\leq (1+e^{\beta-h})\inv +\delta:=A\inv+\delta\,,
$$
so in Line~5 of algorithm {\CondThresh} the edge $e$ is added to $\widehat E$.

We next show that $\widehat E\subseteq E$, so only true edges are included. Suppose $e=(a,b)\notin E$. By choosing $U\subseteq \partial a\setminus \{b\}$, Corollary~\ref{c:delCond} and \eqref{e:accCond} imply that 
$$
\widehat P_{G\setminus U}(\sig_a=1|\sig_b=1)\geq \big(1+2^{d-\alpha}e^{h(d-\alpha+1)}\big)\inv-\delta:=B\inv-\delta\,,
$$
hence the same inequality applies to the maximum computed in Line~5 of the algorithm. 
Now, under the assumption $\beta\geq (d+2-\al)(h+\ln 2)$, we have
$$
A-1 = e^{\beta-h}\geq 4\cdot e^{(d-\al)h}e^h2^{d-\al}=4 (B-1) 
\,.$$
Hence
$$
B\inv - A\inv 
\geq
\frac1B - \frac1{4B-3}=\frac{3B-3}{B(4B-3)}> \frac1{2B}=2\delta\,,
$$
where the last inequality used the fact that $B\geq 2$. This shows that $e\notin E$ is not added to $\widehat E$ and completes the proof. 
\end{proof}

\section{Statistical algorithms and proof of Theorem~\ref{t:STAT}}\label{sec:STAT}

We start by describing the statistical algorithm framework introduced by \cite{feldman2013statistical}. In this section it is convenient to work with variables taking values in $\{-1,+1\}$ rather than $\{0,1\}$.

\subsection{Background on statistical algorithms}

 Let $\XX=\{-1,+1\}^p$ denote the space of configurations and let
 $\DD$ be a set of distributions over $\XX$. Let $\FF$ be a set of solutions (in our case, graphs) and $\ZZ:\DD\to 2^\FF$ be a map taking each distribution $D\in \DD$ to a subset of solutions $\ZZ(D)\subseteq \FF$ that are defined to be valid solutions for $D$. In our setting, since each graphical model under our consideration will be identifiable, there is a single graph $\ZZ(D)$ corresponding to each distribution $D$. For $n>0$, the \emph{distributional search problem} $\ZZ$ over $\DD$ and $\FF$ using $n$ samples is to find a valid solution $f\in \ZZ(D)$ given access to $n$ random samples from an unknown $D\in \DD$. 

The class of algorithms we are interested in are called \emph{unbiased statistical algorithms}, defined by access to an unbiased oracle. Other related classes of algorithms are defined in \cite{feldman2013statistical}, and similar lower bounds can be derived for those as well.
\begin{definition}[Unbiased Oracle]
Let $D$ be the true distribution. The algorithm is given access to an oracle, which when given any function $h:\XX\to \{0,1\}$, takes an independent random sample $x$ from $D$ and returns $h(x)$.
\end{definition}

These algorithms access the sampled data only through the oracle: unbiased statistical algorithms outsource the computation. Because the data is accessed through the oracle, it is possible to prove \emph{unconditional} lower bounds using information-theoretic methods.
As noted in the introduction, many algorithmic approaches can be implemented as statistical algorithms.

We now define a key quantity called average correlation. The \emph{average correlation} of a subset of distributions $\DD'\subseteq \DD$ relative to a distribution $D$ is denoted $\rho(\DD',D)$,
\begin{equation}\label{e:avgCorr}
\rho(\DD',D):= \frac{1}{|\DD'|^2} \sum_{D_1,D_2\in \DD'} 
\left|\left\la \frac{D_1}D-1,\frac{D_2}D-1\right\ra_D\right|\,,
\end{equation}
where $\la f,g\ra_D:= \E_{x\sim D}[f(x) g(x)]$ and the ratio $D_1/D$ represents the ratio of probability mass functions, so $(D_1/D)(x) = D_1(x) / D(x)$. 

We quote the definition of statistical dimension with average correlation from \cite{feldman2013statistical}, and then state a lower bound on the number of queries needed by any statistical algorithm.
\begin{definition}[Statistical dimension]\label{d:StatDim}
Fix $\gam>0,\eta>0$, and search problem $\ZZ$ over set of solutions $\FF$ and class of distributions $\DD$ over $X$. 
We consider pairs $(D,\DD_D)$ consisting of a ``reference distribution" $D$ over $\XX$ and a finite set of distributions $\DD_D\subseteq \DD$ with the following property: for any solution $f\in \FF$, the set $\DD_f = \DD_D\setminus \ZZ\inv (f)$ has size at least $(1-\eta)\cdot |\DD_D|$.
Let $\ell(D,\DD_D)$ be the largest integer $\ell$ so that for any subset $\DD'\subseteq \DD_f$ with $|\DD'|\geq |\DD_f|/\ell$, the average correlation is $|\rho(\DD',D)|<\gam$ (if there is no such $\ell$ one can take $\ell=0$). The \emph{statistical dimension} with average correlation $\gam$ and solution set bound $\eta$ is defined to be the largest $\ell(D,\DD_D)$ for valid pairs $(D,\DD_D)$ as described, and is  denoted by $\mathrm{SDA}(\ZZ,\gam,\eta)$.
\end{definition}

\begin{theorem}[\cite{feldman2013statistical}]\label{t:sampleBound}
  Let $\XX$ be a domain and $\ZZ$ a search problem over a set of solutions $\FF$ and a class of distributions $\DD$ over $\XX$. For $\gam>0$ and $\eta\in (0,1)$, let $\ell = \mathrm{SDA}(\ZZ,\gam,\eta)$. Any (possibly randomized) unbiased statistical algorithm that solves $\ZZ$ with probability $\de$ requires at least $m$ calls to the Unbiased Oracle for
  $$
  m = \min\left\{ \frac{\ell(\de - \eta)}{2(1-\eta)},\frac{(\de-\eta)^2}{12\gam}\right\}\,.
  $$
  In particular, if $\eta \leq 1/6$, then any algorithm with success probability at least $2/3$ requires at least $\min\{ \ell/4,1/48\gam\}$ samples from the Unbiased Oracle.
\end{theorem}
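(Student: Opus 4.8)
Since Theorem~\ref{t:sampleBound} is quoted from~\cite{feldman2013statistical}, I will only outline the approach I would take; the argument is an indistinguishability (simulation) bound against an adaptive adversary. The plan is as follows. Fix a valid pair $(D,\DD_D)$ witnessing $\mathrm{SDA}(\ZZ,\gam,\eta)=\ell$, and suppose for contradiction that some unbiased statistical algorithm $A$ issues only $m<\min\{\ell(\de-\eta)/2(1-\eta),\,(\de-\eta)^2/12\gam\}$ oracle calls yet solves $\ZZ$ with probability $\de$ on every distribution in $\DD$. By conditioning on the internal randomness (and averaging at the end) I may take $A$ deterministic, so it is a depth-$m$ decision tree whose internal nodes are Boolean query functions $h:\XX\to\{0,1\}$, whose two children correspond to the returned bit, and whose leaves carry candidate solutions $f\in\FF$. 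When the oracle draws its samples from a distribution $D'$, the leaf reached is random, and the probability of following a path $(h_1,a_1),\dots,(h_t,a_t)$ is $q_v(D')=\prod_j\big(\E_{D'}[h_j]\big)^{a_j}\big(1-\E_{D'}[h_j]\big)^{1-a_j}$. The goal is to show that on a uniformly random $D_i\in\DD_D$ the algorithm fails with probability exceeding $1-\de$.

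The first substantive step is the analytic estimate that controls, for a single query $h$, how many members of the hard family have answer bias far from the reference bias $\E_D[h]$. Using $\langle D_i/D-1,\,h\rangle_D=\E_{D_i}[h]-\E_D[h]$, for any subset $\DD'\subseteq\DD_D$ on which all deviations $\E_{D_i}[h]-\E_D[h]$ share a sign and exceed $\tau>0$ in magnitude, set $g=\sum_{D_i\in\DD'}(D_i/D-1)$; Cauchy--Schwarz against $h$ together with $\|h\|_D^2\le 1$ gives $(|\DD'|\tau)^2\le\langle g,g\rangle_D\le\sum_{D_1,D_2\in\DD'}\big|\langle D_1/D-1,D_2/D-1\rangle_D\big|=|\DD'|^2\,\rho(\DD',D)$, hence $\rho(\DD',D)\ge\tau^2$. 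Now I invoke the definition of $\ell$: for every output $f$, $\DD_f=\DD_D\setminus\ZZ\inv(f)$ has $|\DD_f|\ge(1-\eta)|\DD_D|$, and any subset of $\DD_f$ of size at least $|\DD_f|/\ell$ has average correlation below $\gam$. Splitting the large-deviation set into a positive and a negative half (one of which has at least half the total size), I conclude that the set of $D_i\in\DD_f$ with $|\E_{D_i}[h]-\E_D[h]|\ge\sqrt\gam$ has size below $2|\DD_f|/\ell$; i.e.\ each query ``flags'' only a small fraction of the hard family.

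With this flagging bound in hand, the remaining step is: along the at most $m$ queries actually issued on a given run, only a $2m/\ell$ fraction of $\DD_f$ is flagged, so when $2m/\ell$ is small enough (in terms of $\de,\eta$) there survives a distribution $D_i\in\DD_f$ on which every issued query has bias within $\sqrt\gam$ of the reference; coupling the oracle's bits for $D_i$ with those for $D$ and pushing the resulting total-variation bound through the decision tree shows that $A$'s output is distributed almost as if it had been run on $D$, hence lands in $\ZZ\inv(f)$ for the true solution $f=\ZZ(D_i)$ only with probability too small to be consistent with $D_i\in\DD_f$ and success probability $\de$ — a contradiction. This is how the two terms of the bound arise: the $\ell/4$ term from requiring $2m|\DD_f|/\ell<|\DD_f|$ so that a survivor exists (the $\de,\eta$ bookkeeping sharpening the constant to $\de-\eta$ over $2(1-\eta)$), and the $1/48\gam$ term from requiring the accumulated perturbation of the oracle answers to stay below $\de-\eta$.

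The hard part will be the last ingredient, because a crude query-by-query total-variation union bound only controls the accumulated perturbation by $m\sqrt\gam$, which yields an $\Omega(1/\sqrt\gam)$ threshold rather than the stated $\Omega(1/\gam)$. To get the right exponent I would instead run a second-moment ($\chi^2$-type) potential argument over the whole tree — bounding $\sum_v\sum_{D_i}q_v(D_i)$ restricted to leaves $v$ whose label is a valid solution for $D_i$ — in which the average correlation enters a telescoping estimate with exponent equal to the path length, so the loss is $(\text{const}\cdot\gam)^{t}$ rather than $t\sqrt\gam$. Making the constants $\de$ and $\eta$ come out exactly as stated, and cleanly exploiting the solution-set bound (that any fixed $f$ is valid for at most an $\eta$-fraction of $\DD_D$), also requires some care, but should be routine once the potential is set up.
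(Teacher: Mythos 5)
The paper itself offers no proof of this statement---it is imported verbatim from \cite{feldman2013statistical}---so there is nothing internal to compare against; the relevant benchmark is the original argument of Feldman et al., and your outline does reconstruct its skeleton correctly. In particular, your flagging lemma is exactly right and is the same Cauchy--Schwarz computation used in the source: summing $D_i/D-1$ over a same-sign deviation set, pairing against $h$ with $\|h\|_D\le 1$, and invoking the $\mathrm{SDA}$ property to conclude that for each query the set of $D_i\in\DD_f$ with $|\E_{D_i}[h]-\E_D[h]|\ge\sqrt{\gam}$ has size below $2|\DD_f|/\ell$. The decision-tree view of an unbiased statistical algorithm and the two-failure-mode structure (a ``survivor'' count yielding the $\ell(\de-\eta)/2(1-\eta)$ term, an accumulation bound yielding the $(\de-\eta)^2/12\gam$ term) also match the source.

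Two genuine gaps remain, and they are exactly where the substance of the original proof lives. First, adaptivity: your survivor argument speaks of ``the queries actually issued on a given run,'' but which queries are issued depends on the oracle's random bits, which depend on the unknown $D_i$; one cannot fix a set of $m$ queries and then pick a surviving $D_i$ afterwards. The correct bookkeeping averages over $D_i$ drawn uniformly from $\DD_D$ and bounds, path by path (or by coupling against the reference $D$), the probability mass of distributions for which some query along the realized path is flagged---this is where the $(\de-\eta)$ and $(1-\eta)$ factors actually come from, not mere ``bookkeeping.'' Second, the $(\de-\eta)^2/(12\gam)$ term: as you note, a per-query total-variation union bound only gives $m\sqrt{\gam}$ and hence the wrong exponent, and your proposed fix (a $\chi^2$/second-moment potential over the tree, so squared biases of order $\gam$ accumulate additively) is the right idea but is only announced, not executed; the subadditive accumulation over adaptively chosen Bernoulli answers, with constants $12$ and $48$ coming out, is the technical core of the theorem. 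So as a blind proof the proposal is an accurate plan aligned with \cite{feldman2013statistical}, but it is not yet a proof of the quantitative statement.
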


In order to show that a graphical model on $p$ nodes of maximum degree $d$ requires computation $p^{\Omega(d)}$ in this computational model, we therefore would like to show that $\mathrm{SDA}(\ZZ,\gam,\eta) = p^{\Omega(d)}$ with $\gam = p^{-\Omega(d)}$.

\subsection{Soft parities}

For any subset $S\subset [p]$ of cardinality $|S|= d$, let $\chi_S(x) = \prod_{i\in S} x_i$ be the parity of variables in $S$. Define a probability distribution by assigning mass to $x\in \{-1,+1\}^p$ according to
\begin{equation}\label{e:softParity}
p_S(x) = \frac1Z \exp(c\cdot \chi_S(x))\,.
\end{equation}
Here $c$ is a constant, and the partition function is
\begin{equation}\label{e:parityZ}
Z = \sum_x \exp(c\cdot \chi_S(x)) = 2^{p-1}( e^c +e^{-c})\,.
\end{equation}
Our family of distributions $\DD$ is given by these soft parities over subsets $S\subset [p]$, and $|\DD| = {p\choose d}$.

\begin{lemma}\label{l:parityCorr} Let $U$ denote the uniform distribution on $\{-1,+1\}^p$.
For $S\neq T$, the correlation $\la \frac{p_S}U-1,\frac{p_T}U-1 \ra $ is exactly equal to zero for any value of $c$. If $S= T$, the correlation $\la \frac{p_S}U-1,\frac{p_S}U-1 \ra  = 1-\frac4{(e^c+e^{-c})^2}\leq 1$.
\end{lemma}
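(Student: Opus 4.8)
The plan is to compute the inner product $\la \frac{p_S}{U}-1,\frac{p_T}{U}-1\ra_U$ directly from the definition, expanding the product and using the Fourier-orthogonality of parity functions over the uniform measure. First I would observe that since $U$ is uniform on $\{-1,+1\}^p$ with mass $2^{-p}$ at each point, we have $\frac{p_S}{U}(x) = 2^p p_S(x) = \frac{2^p}{Z}\exp(c\chi_S(x))$, and using $Z = 2^{p-1}(e^c+e^{-c})$ from \eqref{e:parityZ} this simplifies to $\frac{p_S}{U}(x) = \frac{2\exp(c\chi_S(x))}{e^c+e^{-c}}$. Since $\chi_S(x)\in\{-1,+1\}$, we can write $\exp(c\chi_S(x)) = \cosh(c) + \chi_S(x)\sinh(c)$, so that $\frac{p_S}{U}(x) - 1 = \frac{2\cosh c + 2\chi_S(x)\sinh c}{2\cosh c} - 1 = \chi_S(x)\tanh(c)$, using $e^c+e^{-c}=2\cosh c$. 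This is the key simplification: the centered density ratio is just $\tanh(c)$ times the parity function.

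Next I would plug this into the inner product: $\la \frac{p_S}{U}-1,\frac{p_T}{U}-1\ra_U = \tanh^2(c)\,\E_{x\sim U}[\chi_S(x)\chi_T(x)] = \tanh^2(c)\,\E_{x\sim U}[\chi_{S\triangle T}(x)]$, where $S\triangle T$ is the symmetric difference and I use $\chi_S\chi_T = \chi_{S\triangle T}$ since $x_i^2=1$. The standard fact that $\E_{x\sim U}[\chi_A(x)] = \ind\{A=\varnothing\}$ then gives the result: when $S\neq T$ the symmetric difference is nonempty so the correlation is exactly zero regardless of $c$; when $S=T$ the correlation equals $\tanh^2(c)$. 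Finally I would rewrite $\tanh^2(c)$ in the form stated: $\tanh^2 c = 1 - \mathrm{sech}^2 c = 1 - \frac{4}{(e^c+e^{-c})^2}$, which is manifestly at most $1$.

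I do not anticipate a genuine obstacle here; the computation is short once one spots that $\exp(c\chi_S(x))$ is affine in $\chi_S(x)$ because $\chi_S(x)$ squares to one. The only mild care needed is bookkeeping with the $2^p$ factors relating $U$ and counting measure, and making sure the ratio $p_S/U$ is interpreted as a ratio of probability mass functions as stated in the text preceding \eqref{e:avgCorr}. If anything is "the hard part," it is purely presentational: being explicit that $\la f,g\ra_U = \E_{x\sim U}[f(x)g(x)]$ and that orthogonality of characters under the uniform distribution is what kills all cross terms.
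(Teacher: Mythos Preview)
Your proof is correct and in fact cleaner than the paper's. The key difference is that you first linearize the density ratio via $\exp(c\chi_S(x))=\cosh c+\chi_S(x)\sinh c$ to obtain $\frac{p_S}{U}-1=\chi_S\tanh c$, and then invoke orthogonality of characters under the uniform measure. The paper instead expands $\la \frac{p_S}{U}-1,\frac{p_T}{U}-1\ra_U=\sum_x 2^p p_S(x)p_T(x)-1$ and evaluates the sum combinatorially, splitting over $x_{S\cap T}$ and $x_{S\triangle T}$ and counting how many assignments yield each pair of parity values. Your approach is shorter, makes the reason for the vanishing (orthogonality of $\chi_S$ and $\chi_T$ for $S\neq T$) completely transparent, and avoids the bookkeeping with $\lambda=|S\cap T|$; the paper's route is more elementary in the sense of not naming Fourier orthogonality, but is otherwise just a longer version of the same underlying fact. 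Both arrive at $\tanh^2 c=1-\frac{4}{(e^c+e^{-c})^2}$ for the diagonal case.
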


\begin{lemma}\label{l:largeSetCorr}
For any set $\DD'\subseteq \DD$ of size at least $|\DD|/p^{d/2}$, the average correlation satisfies $\rho(\DD',U)\leq d^d p^{-d/2}$\,.
\end{lemma}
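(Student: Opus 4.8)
The plan is to reduce everything to Lemma~\ref{l:parityCorr}. Since the correlation $\la \frac{p_S}{U}-1,\frac{p_T}{U}-1\ra_U$ vanishes whenever $S\neq T$ and equals a quantity bounded by $1$ when $S=T$, the double sum defining $\rho(\DD',U)$ in \eqref{e:avgCorr} collapses to its diagonal: only the $|\DD'|$ pairs with $S=T$ contribute. Hence
\[
\rho(\DD',U)=\frac{1}{|\DD'|^2}\sum_{S,T\in\DD'}\left|\la \tfrac{p_S}{U}-1,\tfrac{p_T}{U}-1\ra_U\right|
=\frac{1}{|\DD'|^2}\sum_{S\in\DD'}\left(1-\tfrac{4}{(e^c+e^{-c})^2}\right)\le \frac{1}{|\DD'|}\,.
\]

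The second step is simply to plug in the hypothesis on the size of $\DD'$. We have $|\DD|=\binom{p}{d}$, and the assumption $|\DD'|\ge |\DD|/p^{d/2}=\binom{p}{d}p^{-d/2}$ gives
\[
\rho(\DD',U)\le\frac{1}{|\DD'|}\le\frac{p^{d/2}}{\binom{p}{d}}\,.
\]
It remains to compare $\binom{p}{d}$ with $p^{d/2}\cdot d^d p^{-d/2}=d^d$; equivalently, to check $\binom{p}{d}\ge d^{-d}p^{d}\cdot p^{-d/2}\cdot p^{d/2}$... more directly, we need $p^{d/2}/\binom{p}{d}\le d^d p^{-d/2}$, i.e. $\binom{p}{d}\ge p^{d}/d^{d}$. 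This is the standard bound $\binom{p}{d}\ge (p/d)^d$, valid for all $1\le d\le p$, which finishes the proof.

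There is essentially no obstacle here: the only content is recognizing that the off-diagonal terms die by Lemma~\ref{l:parityCorr}, which is the orthogonality of distinct Fourier characters, and then invoking the elementary inequality $\binom{p}{d}\ge (p/d)^d$. The one point worth stating carefully in the write-up is that the bound is completely independent of the constant $c$ and of which subsets make up $\DD'$ — it depends only on $|\DD'|$ — so the statistical dimension argument that follows (feeding this into Theorem~\ref{t:sampleBound} with $\gamma=d^d p^{-d/2}$) goes through for the worst case over all large $\DD'$ automatically.
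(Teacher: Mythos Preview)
Your proof is correct and follows exactly the same approach as the paper: use Lemma~\ref{l:parityCorr} to kill the off-diagonal terms, bound the diagonal ones by $1$ so that $\rho(\DD',U)\le 1/|\DD'|$, plug in the size hypothesis, and finish with the elementary estimate $\binom{p}{d}\ge (p/d)^d$. The paper's version is terser but otherwise identical.
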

\begin{proof}
By the preceding lemma, the only contributions to the sum \eqref{e:avgCorr} comes from choosing the same set $S$ in the sum, of which there are a fraction $1/|\DD'|$. Each such correlation is at most one by Lemma~\ref{l:parityCorr}, so $\rho\leq 1/|\DD'|\leq p^{d/2} / |\DD| =p^{d/2} / {p\choose d} \leq d^d / p^{d/2}$. Here we used the estimate ${n\choose k}\geq (\frac nk)^k$. 
\end{proof}

\begin{proof}[Proof of Theorem~\ref{t:STAT}]
Let $\eta = 1/6$ and $\gamma = d^d p^{-d/2}$, and consider the set of distributions $\DD$ given by soft parities as defined above. With reference distribution $D =U$, the uniform distribution, Lemma~\ref{l:largeSetCorr} implies that $\mathrm{SDA}(\ZZ,\gam,\eta)$ of the structure learning problem over distribution \eqref{e:softParity} is at least $\ell = p^{d/2}/d^d$. 
The result follows from Theorem~\ref{t:sampleBound}. 
\end{proof}

\subsubsection*{Acknowledgments}
This work was supported in part by NSF grants CMMI-1335155 and CNS-1161964, and by Army Research Office MURI Award W911NF-11-1-0036.

\newpage

{\small
\bibliographystyle{ieeetr}
\bibliography{spinGlass_BIB}
}

\newpage

\section*{Supplementary material}

\subsection*{Proof of Lemma~\ref{l:parityCorr}}

Calculating correlation relative to the uniform distribution $U$ (see Equation~\eqref{e:avgCorr}), we have for $S\neq T$ with $|S\cap T| = \lam$
\begin{align}
\left\la \frac{p_s}U-1,\frac{p_T}U-1\right\ra_U 
&= \sum_{x\in \{-1,+1\}^p} 2^{-p} (2^{p} p_S(x) - 1)( 2^{p} p_T(x)-1)\nonumber
\\
&=\label{e:correlationTemp}
\sum_{x\in \{-1,+1\}^p} 2^{p}p_S(x)p_T(x)-1\,.
\end{align}
Now 
\begin{align*}
\sum_{x\in \{-1,+1\}^p} 2^{p}p_S(x)p_T(x) & =
\frac{2^p}{Z^2} \sum_x \exp(c\cdot (\chi_S(x) + \chi_T(x)))
\\ &= \frac{2^p\cdot 2^{p-2N+\lam}}{Z^2} \sum_{x_{S\cap T}}\sum_{x_{S\De T}} \exp(c\cdot (\chi_S(x) + \chi_T(x)))
\\&\stackrel{(a)}{=}\frac{2^p\cdot 2^{p-2N+\lam}}{Z^2} \sum_{x_{S\cap T}} 2^{2N-2\lam} \cdot \frac14\cdot \big( e^{2c} + e^{-2c} +2 \big)
\\&= \frac{2^{2p - 2}}{Z^2} (e^c + e^{-c})^2 \stackrel{(b)}{=} 1\,.
\end{align*}
Step (a) follows from the fact that for any fixed $x_{S\cap T}$, half the assignments to $x_{S\setminus T}$ result in $\chi_S=1$ and half $\chi_S=-1$, and similarly for $x_{T\setminus S}$; step (b) is from the formula \eqref{e:parityZ} for~$Z$. 

For the case $S=T$, 
we have
\begin{align*}
\sum_{x\in \{-1,+1\}^p} 2^{p}p_S(x)p_T(x) & =
\frac{2^p}{Z^2} \sum_x \exp(c\cdot (\chi_S(x) + \chi_T(x)))
\\ &= \frac{2^p\cdot 2^{p-1}}{Z^2} (e^{2c} + e^{-2c})
\\&= \frac{2^{2p-2}}{Z^2} 2(e^{c} + e^{-c})^2 - \frac{4}{(e^c+e^{-c})^2} = 2 - \frac{4}{(e^c+e^{-c})^2}
\,.
\end{align*}
Plugging this into \eqref{e:correlationTemp} completes the proof.
         \qed

\subsection*{Proof of Lemma~\ref{l:CondProbEst}}

	Azuma's inequality states that if $Y\sim
	\hbox{Bin}(n,\mu)$, then
	\[
	P(|Y-n\mu| > \gamma n) \leq 2\exp(-2\gamma^2 n)\,,
	\]
	so for any subset of nodes $W \subseteq V$ and configuration
	$x_W \in \{0,1\}^{|W|}$ we have
	\begin{equation}\label{e:chernoff}
	\P\Big( \big| \Ph (\sig_W=x_W) - \P(\sig_W=x_W) \big| \geq
	\gamma \Big) \leq 2\exp(-2\gamma^2 n).
	\end{equation}
	There are $2^{|W|} {p\choose |W|}\leq (2p)^{|W|}$ such choices of
	$W$ and $x_W$ of a given cardinality, and hence at most $2(2p)^{2}=8p^2$ choices of $W$ and $x_W$ with $|W|\leq 2$.  
	
	Suppose $n \geq (2\gamma^2)^{-1} \log\big(8p^2/\zeta\big)$. 
	An application of the
	union bound implies that with probability at least $$1-8p^2\cdot
	2\exp(-2\gamma^2 n)\geq  1-\zeta
	$$ it holds that
	\begin{equation}\label{e:probabilityBoundNew}
	\big| \Ph (X_\WW=x_\WW) - \P(X_\WW=x_\WW) \big| \leq \gamma
	\end{equation}
	for all $\WW$ and $x_\WW$ with $|\WW|\leq 2$.
	For the remainder of the proof assume \eqref{e:probabilityBoundNew} holds.
	An application of the triangle inequality leads to
	\begin{align*}
	&\Big|\P(\sig_a=1|\sig_b=1) -\Ph(\sig_a=1|\sig_b=1)\Big|
	\\&=\Big|\frac{\P(\sig_a=1,\sig_b=1)}{\P(\sig_b=1)} -\frac{\Ph(\sig_a=1,\sig_b=1)}{\Ph(\sig_b=1)}\Big|
	\\&\leq 
	\Big|\frac{\P(\sig_a=1,\sig_b=1)}{\P(\sig_b=1)} -\frac{\Ph(\sig_a=1,\sig_b=1)}{\P(\sig_b=1)}\Big| + \Big|\frac{\Ph(\sig_a=1,\sig_b=1)}{\P(\sig_b=1)} -\frac{\Ph(\sig_a=1,\sig_b=1)}{\Ph(\sig_b=1)}\Big|
	\\&\leq\frac{2\gamma}{q} \,,
	\end{align*}
	Taking $\gamma = q\eps /2$ and plugging into the above value for $n$ proves the lemma. 	
\qed

\end{document}